\documentclass{article}

\usepackage{times}
\usepackage{graphicx} 
\usepackage{subfigure}

\usepackage{natbib}

\usepackage{algorithm}
\usepackage{algorithmic}
\usepackage[algo2e,lined,boxed,vlined]{algorithm2e}

\usepackage{amsfonts}
\usepackage{amssymb}
\usepackage{booktabs}

\usepackage{hyperref}
\usepackage{xurl}

\usepackage[accepted]{icml2016}

\usepackage{comment}
\usepackage{amsthm}
\usepackage{mathtools}
\usepackage{tabularx}
\usepackage{multirow}

\newtheorem{lemma}{Lemma}
\newtheorem{proposition}{Proposition}

\newtheorem{remark}{Remark}

\def\b{\ensuremath\boldsymbol}

\usepackage{lastpage}
\usepackage{fancyhdr}
\usepackage{url}
\icmltitlerunning{Data Evolution by Wittgenstein's Rule Following}

\begin{document}


\twocolumn[
\icmltitle{Data Evolution by Wittgenstein's Rule Following}

\icmlauthor{Aydin Ghojogh}{aghojogh@lakeheadu.ca}

\icmlauthor{Benyamin Ghojogh}{bghojogh@uwaterloo.ca}


\vskip 0.3in
]

\begin{abstract}
This paper introduces Wittgenstein's Rule Following (WRF) data evolution, a framework in philomatics for evolving or generating a new dataset from a sequence of previously observed datasets. The method is inspired by Ludwig Wittgenstein's rule-following considerations and his notion of family resemblance in \textit{Philosophical Investigations}. Unlike standard synthetic data generation, where the goal is usually to sample from or augment a fixed distribution, WRF aims to continue the implicit rule expressed by a historical sequence of datasets while preserving resemblance to the previous datasets.

WRF represents each dataset by structural descriptors rather than pointwise correspondences. These descriptors summarize geometric, distributional, clustering, and, in the supervised case, label-based properties of the data. The method predicts a rule-following target by extrapolating descriptor trajectories and a family-resemblance target by averaging historical descriptors. Candidate datasets are then generated from the observed history through balanced or bounded mixture recombination, scored according to these targets, and optionally refined through differentiable optimization in descriptor space.

The proposed framework allows both sample size and feature dimension to vary over time and does not assume that the next dataset is a direct transformation of the last one. Simulations on synthetic and image datasets show that WRF can generate meaningful continuations of evolving datasets in both unsupervised and supervised settings.
\end{abstract}

{\textbf{\textit{Keywords---}} data evolution, data generation, sequence of datasets, rule following, family resemblance, Ludwig Wittgenstein, philomatics, machine learning.}

\hfill\break
{\textbf{\textit{Code---}} \url{https://github.com/aghojogh/WRF_data_evolution}}

\section{Introduction}

Many machine learning problems involve not only analyzing a fixed dataset, but also understanding how data may evolve over time. In such settings, we may observe a sequence of datasets and wish to generate a plausible next dataset which continues the structural pattern expressed by the previous ones. This problem differs from ordinary sample generation because the goal is not merely to sample more data from one fixed distribution. 
This distinguishes the present setting from standard synthetic data generation and data augmentation, where the aim is usually to enlarge or transform an existing dataset rather than to continue a sequence of changing datasets \cite{goodfellow2014generative, kingma2013auto, chen2023unified, iwana2021empirical}.

Rather, the goal here is to continue a rule implicit in a historical sequence of datasets. The datasets in this sequence may have different numbers of samples, different feature dimensions, and, in the supervised case, different label structures. 
Therefore, the next dataset cannot always be modeled as a direct pointwise transformation of the last dataset.
This setting is also related to learning under changing or nonstationary data distributions, often studied under concept drift and data-stream learning \cite{gama2014survey,lu2018learning}.

Recently, some machine learning algorithms have been developed that make use of philosophical concepts \cite{ghojogh2022affective}.
Specifically, \textit{philomatics} has been introduced as a framework for integrating philosophy and mathematics, including machine learning as a mathematical discipline \cite{ghojogh2023philomatics}. 
In this paper, we contribute to the growing line of research that develops artificial intelligence through ideas inspired by Wittgenstein’s philosophy \cite{miller2021does,ghojogh2023philomatics,molino2023witgenstein,amanpour2026wittgenstein}.

This paper proposes a philomatically motivated algorithm \cite{ghojogh2023philomatics} for addressing the abovementioned problem, drawing inspiration from Ludwig Wittgenstein’s rule-following considerations and his notion of family resemblance. In Wittgenstein’s philosophy, following a rule is not simply a matter of applying a private formula that uniquely determines all future cases. Instead, rule-following is grounded in practice, examples, correction, and the ability to continue in a way that is accepted as correct within a shared practice. At the same time, Wittgenstein’s notion of family resemblance suggests that members of a family need not share one common essence; rather, they may be connected through overlapping similarities. These two ideas motivate the central viewpoint of this paper: the next dataset should continue the rule expressed by the historical sequence while also preserving resemblance to the family of previously observed datasets.

We formulate this idea as the problem of rule-following data evolution. Given a sequence of observed datasets:
\begin{align*}
\mathcal{X}_{1:T} := \{\b{X}_1,\ldots,\b{X}_T\},
\end{align*}
and, in the supervised case, a corresponding sequence of label vectors:
\begin{align*}
\mathcal{Y}_{1:T} := \{\b{y}_1,\ldots,\b{y}_T\},
\end{align*}
the goal is to generate an evolved dataset $\widehat{X}_{T+1}$, and optionally its labels $(\widehat{y}_{T+1})$. The method is required to use only the observed historical datasets and their labels, when available. It does not assume that the next dataset is obtained from the last dataset by a known transformation, nor does it require pointwise correspondence among datasets.

To address this problem, we propose Wittgenstein’s Rule Following (WRF) data evolution. The main idea is to represent each dataset by a finite-dimensional descriptor vector that summarizes its structural properties. In the unsupervised case, the descriptor contains information about sample size, feature dimension, pairwise-distance distribution, covariance structure, principal variance spectrum, and clustering tendency. In the supervised case, an additional descriptor summarizes the label organization, including the number of classes, class entropy, class imbalance, within-class compactness, between-class separation, and class separability. These descriptors allow datasets of different sizes and, after suitable alignment, different dimensions to be compared at a structural level rather than by point-wise correspondence.

WRF combines two complementary targets in descriptor space. The first is a rule-following target, obtained by extrapolating the trajectory of historical descriptors. This target encourages the evolved dataset to continue the trend expressed by the sequence. The second is a family-resemblance target, obtained by averaging the historical descriptors. This target prevents the generated dataset from drifting too far from the family of observed datasets. Thus, the algorithm balances continuation and resemblance: it tries to go forward according to the observed rule while remaining connected to the historical family.

The generation procedure is history-based. Candidate datasets are constructed only from the observed datasets, after mapping them to a predicted target dimension. We consider balanced family generation, where each historical dataset contributes approximately equally, and bounded mixture generation, where random mixture proportions are used but every previous dataset receives a non-negligible contribution. Small noise may be added so that candidates are not merely exact resamplings of past data. Each candidate is then scored by a weighted loss measuring its agreement with the rule-following target, the family-resemblance target, the last observed dataset, the predicted shape, and a non-collapse criterion. In the supervised case, additional label-aware losses are included.

After selecting the best candidate, WRF can optionally refine it by differentiable optimization. Since some descriptor components, such as quantiles, clustering assignments, and silhouette scores, are not convenient for gradient-based optimization, the refinement step uses differentiable surrogate descriptors. The coordinates of the selected candidate are optimized so that the evolved dataset better matches the rule-following and family-resemblance targets, while the labels, if present, remain fixed. This produces a final evolved dataset that is both historically grounded and structurally adjusted.

The main contributions of this paper are as follows. First, we introduce the problem of rule-following data evolution, where the aim is to generate a continuation of a sequence of datasets rather than to model a single fixed dataset. Second, we propose WRF, a philomatically motivated algorithm inspired by Wittgenstein’s rule following and family resemblance. Third, we develop unsupervised and supervised dataset descriptors that allow comparison of datasets without pointwise correspondence. Fourth, we introduce a history-based candidate generation, scoring, and refinement framework for evolving datasets with possibly varying sample sizes and feature dimensions. Finally, we analyze the algorithmic complexity of WRF and demonstrate its behavior through simulations in both unsupervised and supervised settings.

The remainder of this paper is organized as follows. Section \ref{section_background} reviews the philosophical background on Wittgenstein’s rule following. Section \ref{section_problem_definition} formalizes the problem of rule-following data evolution. Section \ref{section_dataset_descriptors} defines the unsupervised and supervised dataset descriptors. Section \ref{section_rule_prediction} introduces rule-following and family-resemblance targets in descriptor space for rule prediction. Section \ref{section_candidate_generation} describes history-based candidate generation. Section \ref{section_candidate_scoring} presents the candidate scoring procedure. Section \ref{section_data_refinement} explains the differentiable data refinement step. Section \ref{section_algorithm_complexity} summarizes the full WRF algorithm and analyzes its time and space complexities. Section \ref{section_simulations} presents simulations, and Section \ref{section_conclusion} concludes the paper.

\section{Background on Wittgenstein's Rule Following in Analytic Philosophy}\label{section_background}

This section introduces Ludwig Wittgenstein and his philosophy, which has inspired the development of this algorithm.
We also explain rule following discussed by Wittgenstein. A numerical example is also provided for better understanding of Wittgenstein's rule following.

\subsection{Introduction to Ludwig Wittgenstein's Philosophy}

\subsubsection{Biography of Ludwig Wittgenstein}

Ludwig Wittgenstein (1889--1951) was an Austrian-British philosopher whose work had a major impact on logic, the philosophy of language, and the philosophy of mind. He first studied engineering in Berlin before going to Cambridge, where he worked with Bertrand Russell. He later held a professorship in philosophy at the University of Cambridge. Wittgenstein became one of the most influential figures in twentieth-century analytic philosophy \cite{wittgenstein1990ludwig}.

\subsubsection{Early Wittgenstein (1911--1929)}

Wittgenstein's early period, roughly from 1911 to 1929, is represented most famously by his \textit{Tractatus Logico-Philosophicus}, first published in 1921 \cite{wittgenstein1921tractatus}. In the \textit{Tractatus}, Wittgenstein presents a picture theory of language, according to which meaningful propositions picture facts by sharing a logical structure with reality. The book attempts to determine the limits of meaningful language and to dissolve philosophical problems by showing that many of them arise from attempts to say what cannot meaningfully be said. The work is highly systematic and is organized through a hierarchical numbering of propositions, a structure often compared to Spinoza's \textit{Ethics} \cite{spinoza1677ethics}. Its central concern is the relation between language, logical form, and the world \cite{stanford2002wittgenstein}.

\subsubsection{Later Wittgenstein (1930--1951)}

After withdrawing from academic philosophy for several years, Wittgenstein returned to Cambridge and gradually moved away from many of the central ideas of the \textit{Tractatus}. His later philosophy, developed mainly from 1930 until his death in 1951, is most clearly expressed in \textit{Philosophical Investigations}, published posthumously in 1953 \cite{wittgenstein1953philosophical}. In this later work, Wittgenstein rejects the idea that language has one single underlying logical essence. Instead, he emphasizes that meaning depends on the use of words within particular contexts and practices. This view is developed through concepts such as ``language-games'' and ``forms of life.'' For the later Wittgenstein, philosophy is not primarily the construction of a theory, but a therapeutic activity: it clarifies how language is actually used in order to remove conceptual confusion \cite{stanford2002wittgenstein}.

\subsection{Wittgenstein's Rule Following}

Wittgenstein's rule-following considerations are developed mainly in
\emph{Philosophical Investigations} \cite{wittgenstein1953philosophical}, \S\S 185--242\footnote{It is a standard scholarly shorthand, especially in philosophy and law. It means from section 185 through section 242 (inclusive). Note that, in philosophical manuscripts, \S\, and \S\S\, refer to Section and Sections, respectively.}.
In these passages, Wittgenstein investigates what it means for a person to follow a rule correctly. His discussion begins from cases such as continuing a numerical series, for example continuing by the rule ``+2'' \citep[\S\S185--190]{wittgenstein1953philosophical}. At first, one may think that the rule itself, or the agent's inner understanding of it, already determines all future applications. Wittgenstein challenges this view by showing that no interpretation of a rule can, by itself, fix its application in every possible case. Any interpretation can itself be interpreted in different ways, and therefore interpretation alone cannot serve as the final ground of correct rule-following \citep[§§198--201]{wittgenstein1953philosophical}.

The central formulation of the rule-following problem appears in §201, where Wittgenstein argues that, if rule-following were merely a matter of interpretation, then any course of action could be made to agree with the rule under some interpretation \citep[§201]{wittgenstein1953philosophical}\footnote{This issue became central in later debates on the rule-following paradox, especially through Kripke's interpretation of Wittgenstein \cite{kripke1982wittgenstein}. Kripke argues that no finite record of a person's past applications of a rule is sufficient, by itself, to determine a unique future application of that rule. For example, even if a person has always used addition correctly for small numbers, one may still construct an alternative rule that agrees with all previous uses but differs in a new case. This skeptical challenge emphasizes the gap between past behavior and future correctness. In Kripke's reading, Wittgenstein's response is not to locate the meaning of a rule in a private mental fact, but to understand rule-following through public practice, communal agreement, and shared standards of correction.}. 
This is not meant to show that rules are useless or that correctness is impossible. Rather, Wittgenstein's point is that the application of a rule is not grounded in a private mental act standing behind the action. A rule has force only within a practice: it is learned, applied, corrected, and stabilized through use. Thus, to follow a rule is not merely to consult an inner formula, but to participate in a practice in which certain applications are accepted as correct and others as incorrect \citep[§§199--202]{wittgenstein1953philosophical}.

This is why Wittgenstein connects rule-following with custom, training, agreement, and forms of life. The correctness of following a rule is not established by a purely private standard, since a private standard would not provide a genuine distinction between seeming correct and being correct. Instead, rule-following depends on public criteria of use\footnote{Different interpretations of this point have been discussed extensively in the secondary literature on Wittgenstein's rule-following considerations \cite{kripke1982wittgenstein,mcdowell1984wittgenstein,baker1984scepticism}.}. 
One learns how to go on in the same way through examples, correction, habit, and shared human practices \citep[§§202, 217--219]{wittgenstein1953philosophical}. In this sense, Wittgenstein's rule-following considerations are closely connected to his broader view that meaning is use. The meaning of a rule is not an abstract object detached from practice, but is expressed in the way the rule is actually employed within a language-game \citep[§§23, 43, 199]{wittgenstein1953philosophical}.

The discussion culminates in Wittgenstein's emphasis on agreement in judgments and forms of life. Human beings can follow rules because their actions, responses, and judgments are embedded in shared forms of activity. Agreement is not merely agreement in definitions, but agreement in the ways people act, judge, correct, and continue practices \citep[§§240--242]{wittgenstein1953philosophical}. Therefore, rule-following reveals a deep connection between meaning, action, and social practice. A rule does not determine its applications through a hidden private mechanism; rather, its normativity arises from its role in a public practice governed by training, use, and communal standards of correctness.

\subsection{An Example for Wittgenstein's Rule Following}


As mentioned above, Wittgenstein's discussion of rule following begins with examples involving the continuation of a numerical series \citep[§§185--190]{wittgenstein1953philosophical}. To clarify the idea, consider the following finite sequence:
\begin{align*}
2, 4, 8, \dots
\end{align*}
What should the next number be? This is the kind of problem that primary-school students are sometimes asked to solve by their teachers.

One natural answer is $16$. A person may infer that the sequence follows a geometric rule with common ratio $2$, since $4=2\times 2$ and $8=4\times 2$. Under this interpretation, the next term is:
\begin{align*}
8\times 2 = 16.
\end{align*}

However, another person may infer a different rule. For example, one may say that the sequence is generated by alternately adding $2$ and $4$:
\begin{align*}
2,\quad 2+2=4,\quad 4+4=8,\quad 8+2=10,\quad \ldots
\end{align*}
Under this interpretation, the next number is $10$, not $16$.


Many more complex rules can also be constructed. For example, consider the cubic polynomial function:
\begin{align*}
f(n) = 1165 n^3 - 6989 n^2 + 12814 n - 6988,
\end{align*}
where $n$ denotes the index of the term, starting from $n=1$.
Then, we can say that:
\begin{itemize}
\item $n=1 \implies f(1) = 1165 (1) - 6989 (1) + 12814 (1) - 6988 = 2$
\item $n=2 \implies f(2) = 1165 (8) - 6989 (4) + 12814 (2) - 6988 = 4$
\item $n=3 \implies f(3) = 1165 (27) - 6989 (9) + 12814 (3) - 6988 = 8$
\item $n=4 \implies f(4) = 1165 (64) - 6989 (16) + 12814 (4) - 6988 = 7004$
\end{itemize}
Therefore, the next number should be $7004$ according to the rule!

This example illustrates an important point in Wittgenstein's rule-following considerations: a finite sequence does not, by itself, uniquely determine how it must be continued. Many different rules can agree with the same observed initial terms while producing different future continuations. For Wittgenstein, the application of a rule is therefore not fixed merely by an abstract formula or by a private interpretation. Rather, what counts as continuing the rule correctly is stabilized by practice, training, examples, correction, and shared criteria of use.

This observation motivates the data-evolution problem studied in this paper. Given only a finite history of datasets, there may be many possible ways to generate a next dataset that is compatible with the observed history. The proposed WRF method addresses this ambiguity by combining two principles: rule following, which encourages continuation of the structural trend in the historical sequence, and family resemblance, which keeps the generated dataset close to the family of previously observed datasets.

\section{Problem Definition: Data Evolution}\label{section_problem_definition}

Consider an observed sequence of datasets:
\begin{align}
\mathcal{X}_{1:T} := \{\b{X}_1,\ldots,\b{X}_T\} = \{\b{X}_t\}_{t=1}^T,
\end{align}
where the $t$-th dataset is:
\begin{align}
\b{X}_t \in \mathbb{R}^{n_t \times d_t}.
\end{align}
The number of samples $n_t$ and the dimension $d_t$ are allowed to vary with $t$. 

In the supervised case, each dataset is equipped with labels:
\begin{align}
\mathcal{Y}_{1:T} := \{\b{y}_1,\ldots,\b{y}_T\} = \{\b{y}_t\}_{t=1}^T,
\end{align}
where $\b{y}_t \in
\{1, 2, \dots, K_t\}^{n_t}$ is the vector of labels and $K_t$ is the number of classes in the dataset $\b{X}_t$.


The goal is to evolve, or generate, a new dataset:
\begin{align}
\widehat{\b{X}}_{T+1} \in \mathbb{R}^{\widehat{n}_{T+1}\times \widehat{d}_{T+1}},
\end{align}
which continues the rule implicit in the sequence $\{\b{X}_1,\ldots,\b{X}_T\}$. 
In the supervised case, the labels of the data instances in the evolved dataset should also be generated:
\begin{align}
\widehat{\b{y}}_{T+1}
\in
\{1, 2, \dots, K_{T+1}\}^{\widehat{n}_{T+1}},
\end{align}
where $K_{T+1}$ is the number of classes in the dataset $\widehat{\b{X}}_{T+1}$.


Importantly, the algorithm has only access to the sequence of datasets and possibly to their labels in the supervised case. 
Candidate datasets are constructed from the observed history $\{\b{X}_1,\ldots,\b{X}_T\}$, and, in the supervised case, from the label history $\{\b{y}_1,\ldots,\b{y}_T\}$, with optional random padding and small noise injection.

The WRF algorithm is a rule-following data evolution in which the next dataset is not required to be a transformation of the last dataset. Instead, it is selected and refined to satisfy two competing principles:
\begin{itemize}
\item Wittgenstein's rule following: continue the rule of the sequence of datasets
\item Wittgenstein's family resemblance: preserve family resemblance to the observed history
\end{itemize}
The construction is history-based, i.e., the algorithm receives only the previous datasets, and optionally their labels, and generates a continuation using descriptors, balanced historical recombination, candidate scoring, and differentiable refinement (explained in the following).

\section{Dataset Descriptors}\label{section_dataset_descriptors}

The purpose of the descriptor maps is to represent each dataset by a finite-dimensional vector of statistics. This allows datasets with different numbers of samples and, in some cases, different dimensions to be compared at a structural level rather than by pointwise correspondence. Thus, instead of requiring that the rows of \(\b{X}_t\) and \(\b{X}_{t+1}\) correspond to one another, we compare their global geometric, distributional, and, when available, supervised class structures.

\subsection{Standardization of Datasets}

Let $\b{x}_{t,i} \in \mathbb{R}^{d_t}$ denote the $i$-th data instance of dataset $\b{X}_t$:
\begin{align}
\b{X}_t
=
\begin{bmatrix}
\b{x}_{t,1}^\top \\
\vdots \\
\b{x}_{t,n_t}^\top
\end{bmatrix}
\in \mathbb{R}^{n_t \times d_t}.
\end{align}
We first standardize the dataset feature-wise. 
Feature-wise standardization is a standard preprocessing step in machine learning and multivariate data analysis \cite{bishop2006pattern,hastie2009elements}.
Let the mean of the dataset $\b{X}_t$ be:
\begin{align}
\b{\mu}_t
:=
\frac{1}{n_t}
\sum_{i=1}^{n_t} \b{x}_{t,i}
\in \mathbb{R}^{d_t},
\end{align}
and let the vector of coordinate-wise empirical standard deviations be:
\begin{align}
\b{\sigma}_t
:=
[
\sigma_{t,1},\ldots,\sigma_{t,d_t}
]^\top
\in \mathbb{R}^{d_t}.
\end{align}
The standardized instances of $\b{X}_t$ are:
\begin{align}
\b{z}_{t,i}
:=
\frac{\b{x}_{t,i}-\b{\mu}_t}{\b{\sigma}_t+\varepsilon}
\in \mathbb{R}^{d_t},
\end{align}
for all $i \in \{1, \dots, n_t\}$, where the division is performed componentwise and \(\varepsilon>0\) is a small constant for numerical stability. We write the matrix of standardized data instances as:
\begin{align}
\b{Z}_t
:=
\begin{bmatrix}
\b{z}_{t,1}^\top \\
\vdots \\
\b{z}_{t,n_t}^\top
\end{bmatrix}
\in \mathbb{R}^{n_t \times d_t}.
\end{align}
The standardized data instances are used in defining some of the dataset descriptors, which are explained in the following. 

\subsection{Unsupervised Data Descriptor}

Each dataset is mapped to a $p$-dimensional unsupervised descriptor vector:
\begin{align}
\b{\phi}(\b{X}_t) \in \mathbb{R}^{p}.
\end{align}
In the WRF algorithm, \(\b{\phi}\) is constructed from sample-size and dimension information, pairwise-distance summaries, covariance statistics, PCA-spectrum statistics, and clustering-tendency scores.

\subsubsection{Sample Size and Dimension}
First, we include the sample size and dimension:
\begin{align}
&\phi_1(\b{X}_t) := \log(n_t), \\
&\phi_2(\b{X}_t) := d_t.
\end{align}
The logarithm is used for \(n_t\) because sample sizes may vary by orders of magnitude.

\subsubsection{Pairwise-Distance Distribution}
Next, we define the set of pairwise Euclidean distances in the standardized dataset:
\begin{align}
\mathcal{D}_t
:=
\left\{
\delta_{t,ij}
:=
\|\b{z}_{t,i}-\b{z}_{t,j}\|_2
\mid
1\leq i<j\leq n_t
\right\}.
\end{align}
From this set, we compute the mean, standard deviation, and selected quantiles:
\begin{align}
& m_t
:=
\frac{1}{|\mathcal{D}_t|}
\sum_{\delta\in\mathcal{D}_t} \delta, \\
& s_t
:=
\sqrt{
\frac{1}{|\mathcal{D}_t|}
\sum_{\delta\in\mathcal{D}_t}
(\delta-m_t)^2
}, \\
& q_{t,\alpha}
:=
\operatorname{Quantile}_{\alpha}(\mathcal{D}_t),
\end{align}
for $\alpha\in\{0.10,0.25,0.50,0.75,0.90\}$, where $|\mathcal{D}_t|$ denotes the size of set $\mathcal{D}_t$.
These terms summarize the global spread and distance distribution of the dataset.
Distance-based summaries are commonly used to characterize the geometry and dispersion of multivariate data \cite{mardia2024multivariate,bishop2006pattern}.

\subsubsection{Spread and Anisotropy}
We also compute the unbiased empirical covariance matrix of the standardized dataset $\b{Z}_t$, a standard second-order summary of multivariate data \cite{mardia2024multivariate,bishop2006pattern}:
\begin{align}
\b{C}_t
:=
\frac{1}{n_t-1}
\b{Z}_t^\top \b{Z}_t
\in \mathbb{R}^{d_t\times d_t}.
\end{align}
The total variance is measured by the trace of the covariance matrix:
\begin{align}
\operatorname{tr}(\b{C}_t),
\end{align}
and anisotropy is measured by the condition number:
\begin{align}
\kappa(\b{C}_t+\varepsilon\b{I})
:=
\frac{\lambda_{\max}(\b{C}_t+\varepsilon\b{I})}
{\lambda_{\min}(\b{C}_t+\varepsilon\b{I})},
\end{align}
where \(\lambda_{\max}\) and \(\lambda_{\min}\) denote the largest and smallest eigenvalues. A large condition number indicates that the dataset is elongated or anisotropic, while a condition number closer to one indicates a more isotropic distribution.

\subsubsection{Principal Variance Structure}
Let the normalized eigenvalues of \(\b{C}_t\) be:
\begin{align}
\rho_{t,j}
:=
\frac{\lambda_j(\b{C}_t)}
{\sum_{\ell=1}^{d_t}\lambda_\ell(\b{C}_t)+\varepsilon},
\end{align}
for $j \in \{1, \dots, d_t\}$, where:
\begin{align}
\rho_{t,1}\geq \rho_{t,2}\geq \cdots \geq \rho_{t,d_t}\geq 0.
\end{align}
These are the principal variance ratios, as in Principal Component Analysis (PCA) \cite{jolliffe2016principal,ghojogh2023principal}. We retain the top five values, padding with zeros if \(d_t<5\):
\begin{align}
\b{\rho}^{(5)}_t
:=
[\rho_{t,1},\ldots,\rho_{t,5}]^\top
\in \mathbb{R}^{5}.
\end{align}

\subsubsection{Clustering-Tendency Statistics}
Finally, we include clustering-tendency statistics. For each \(k \in \{2,3,4,5\}\), we apply \(k\)-means clustering \cite{mcqueen1967some,lloyd1982least} to \(Z_t\), producing cluster assignments:
\begin{align}
c^{(k)}_{t,i}\in\{1,\ldots,k\},
\end{align}
where \(c^{(k)}_{t,i}\) denotes the cluster label assigned to standardized data instance \(\b{z}_{t,i}\).
We calculate the corresponding silhouette score $ \operatorname{sil}_k(\b{Z}_t) $ as explained in the following. 

For a fixed \(k\), let:
\begin{align}
\mathcal{C}^{(k)}_{t,r}
:=
\Big\{
i \in \{1,\ldots,n_t\}
\,\big|\,
c^{(k)}_{t,i}=r
\Big\},
\end{align}
denote the index set of the \(r\)-th cluster. For each instance \(\b{z}_{t,i}\), we define its mean intra-cluster distance by:
\begin{align}
a^{(k)}_{t,i}
:=
\frac{1}{|\mathcal{C}^{(k)}_{t,c^{(k)}_{t,i}}|-1}
\sum_{\substack{j\in \mathcal{C}^{(k)}_{t,c^{(k)}_{t,i}}\\ j\neq i}}
\|\b{z}_{t,i}-\b{z}_{t,j}\|_2,
\end{align}
whenever the cluster containing \(\b{z}_{t,i}\) has more than one instance. This quantity measures how close \(\b{z}_{t,i}\) is to the other instances in its own cluster.

Next, for every cluster \(r\neq c^{(k)}_{t,i}\), we define the average distance from \(\b{z}_{t,i}\) to that cluster:
\begin{align}
d^{(k)}_{t,i}(r)
:=
\frac{1}{|\mathcal{C}^{(k)}_{t,r}|}
\sum_{j\in \mathcal{C}^{(k)}_{t,r}}
\|\b{z}_{t,i}-\b{z}_{t,j}\|_2.
\end{align}
The nearest-cluster distance of \(\b{z}_{t,i}\) is then:
\begin{align}
b^{(k)}_{t,i}
:=
\min_{r\neq c^{(k)}_{t,i}}
d^{(k)}_{t,i}(r).
\end{align}
This quantity measures how far \(\b{z}_{t,i}\) is from its nearest neighboring cluster.

The silhouette coefficient of \(\b{z}_{t,i}\) is defined as:
\begin{align}
s^{(k)}_{t,i}
:=
\frac{
b^{(k)}_{t,i}-a^{(k)}_{t,i}
}{
\max\left\{a^{(k)}_{t,i}, b^{(k)}_{t,i}\right\}
}.
\end{align}
Thus:
\begin{align}
-1 \leq s^{(k)}_{t,i} \leq 1.
\end{align}
A value close to \(1\) indicates that \(\b{z}_{t,i}\) is much closer to instances in its own cluster than to instances in other clusters. A value near \(0\) indicates that the instance lies near a cluster boundary. A negative value indicates that the instance may be closer, on average, to another cluster than to its assigned cluster.

The silhouette score of the whole dataset for \(k\) clusters is the average silhouette coefficient \cite{rousseeuw1987silhouettes}:
\begin{align}
\operatorname{sil}_k(\b{Z}_t)
:=
\frac{1}{n_t}
\sum_{i=1}^{n_t}
s^{(k)}_{t,i}.
\end{align}
This score summarizes how well the \(k\)-means partition separates the standardized dataset \(\b{Z}_t\) into \(k\) coherent clusters. If the silhouette score is not well-defined, for example because a cluster contains only one instance or because the clustering degenerates, we set it to zero.

\subsubsection{Unsupervised Descriptor}
Putting these quantities together, the unsupervised descriptor is:
\begin{align}
\b{\phi}(\b{X}_t)
:=
\begin{bmatrix}
\log(n_t)\\
d_t\\
m_t\\
s_t\\
q_{t,0.10}\\
q_{t,0.25}\\
q_{t,0.50}\\
q_{t,0.75}\\
q_{t,0.90}\\
\operatorname{tr}(\b{C}_t)\\
\kappa(\b{C}_t+\varepsilon\b{I})\\
\rho_{t,1}\\
\rho_{t,2}\\
\rho_{t,3}\\
\rho_{t,4}\\
\rho_{t,5}\\
\operatorname{sil}_2(\b{Z}_t)\\
\operatorname{sil}_3(\b{Z}_t)\\
\operatorname{sil}_4(\b{Z}_t)\\
\operatorname{sil}_5(\b{Z}_t)
\end{bmatrix}
\in \mathbb{R}^{20}.
\end{align}
Therefore, in the current variant of the WRF algorithm, we have $p=20$.

\subsection{Supervised Data Descriptor}

In the supervised case, each dataset $\b{X}_t$ is equipped with labels:
\begin{align}
\b{y}_t
\in
\{1, 2, \dots, K_t\}^{n_t},
\end{align}
where $K_t$ is the number of classes in the dataset $\b{X}_t$.

Each dataset is mapped to a $q$-dimensional supervised descriptor vector:
\begin{align}
\b{\psi}(\b{X}_t, \b{y}_t) \in \mathbb{R}^{q}.
\end{align}
In our implementation, \(\b{\psi}\) is constructed from number of classes, class entropy, class imbalance, within-class compactness, between-class separation, and class separability.

\subsubsection{Number of Classes}
Let the set of classes appearing in \(\b{y}_t\) be:
\begin{align}
\mathcal{C}_t
:=
\{c \mid y_{t,i}=c \text{ for at least one } i\}.
\end{align}
The number of classes for the dataset $\b{X}_t$ is:
\begin{align}
K_t := |\mathcal{C}_t|.
\end{align}

\subsubsection{Class Entropy}
For each class \(c\in\mathcal{C}_t\), we define the class index set:
\begin{align}
I_{t,c}
:=
\{i \mid y_{t,i}=c\},
\end{align}
and the class proportion:
\begin{align}
\pi_{t,c}
:=
\frac{|I_{t,c}|}{n_t}.
\end{align}
The class entropy for the $t$-th dataset is:
\begin{align}
H_t
:=
-\sum_{c\in\mathcal{C}_t}
\pi_{t,c}\log(\pi_{t,c}+\varepsilon),
\end{align}

\subsubsection{Class Imbalance}
The class imbalance for the $t$-th dataset is:
\begin{align}
\Delta_t
:=
\max_{c\in\mathcal{C}_t}\pi_{t,c}
-
\min_{c\in\mathcal{C}_t}\pi_{t,c}.
\end{align}

\subsubsection{Within-Class Compactness}
For each class \(c\), we define the standardized class subset:
\begin{align}
\b{Z}_{t,c}
:=
\{\b{z}_{t,i} \mid i\in I_{t,c}\}.
\end{align}
The class centroid is:
\begin{align}
\b{a}_{t,c}
:=
\frac{1}{|I_{t,c}|}
\sum_{i\in I_{t,c}}
\b{z}_{t,i} \in \mathbb{R}^{d_t}.
\end{align}
For each class, we define the summation of within-class distances for the $t$-th dataset:
\begin{align}
w_{t,c}
:=
\frac{2}{|I_{t,c}|(|I_{t,c}|-1)}
\sum_{\substack{i,j\in I_{t,c}\\ i<j}}
\|\b{z}_{t,i}-\b{z}_{t,j}\|_2,
\end{align}
when \(|I_{t,c}|\geq 2\). The mean within-class distance for the $t$-th dataset is:
\begin{align}
W_t
:=
\frac{1}{|\mathcal{C}_t^{\geq 2}|}
\sum_{c\in\mathcal{C}_t^{\geq 2}}
w_{t,c},
\end{align}
where:
\begin{align}
\mathcal{C}_t^{\geq 2} := \{c \in \mathcal{C}_t : |I_{t,c}| \geq 2 \}.
\end{align}
If $\mathcal{C}_t^{\geq 2}$ is an empty set, we set $W_t = 0$.


\subsubsection{Between-Class Separation}
The mean between-class distance for the $t$-th dataset is defined using the class centroids:
\begin{align}
B_t
:=
\frac{2}{K_t(K_t-1)}
\sum_{\substack{\{c,c'\}\subset\mathcal{C}_t, c \neq c'}}
\|\b{a}_{t,c}-\b{a}_{t,c'}\|_2,
\end{align}
when \(K_t\geq 2\), where the sum is taken over all unordered pairs of distinct classes. If \(K_t<2\), we set \(B_t=0\).

\subsubsection{Class Separability}
Finally, the class separability score is:
\begin{align}
S_t
:=
\frac{B_t}{W_t+\varepsilon},
\end{align}
where $\varepsilon >0$ is a small positive value for stability. 
A large value of \(S_t\) indicates that class centroids are far apart relative to the within-class spread.

\subsubsection{Supervised Descriptor}
The supervised descriptor is therefore:
\begin{align}
\b{\psi}(\b{X}_t,\b{y}_t)
:=
\begin{bmatrix}
K_t\\
H_t\\
\Delta_t\\
W_t\\
B_t\\
S_t
\end{bmatrix}
\in \mathbb{R}^{6}.
\end{align}
Thus, in the current variant of the WRF algorithm, there is $q=6$.

\subsection{Interpretation of Data Descriptors}

The unsupervised descriptor \(\b{\phi}(\b{X}_t)\) represents the unsupervised geometry of the dataset: sample size, dimension, distance distribution, spread, anisotropy, principal variance structure, and clustering tendency. The supervised descriptor \(\b{\psi}(\b{X}_t,\b{y}_t)\) represents the supervised organization of the dataset: number of classes, class entropy, class imbalance, within-class compactness, between-class separation, and class separability.

These descriptors are not meant to uniquely identify a dataset. Rather, they provide a structural summary through which different datasets can be compared even when they have different numbers of samples or no pointwise correspondence. The rule-following generator then attempts to produce a new dataset whose descriptors continue the pattern expressed by $\b{\phi}(\b{X}_1),\ldots,\b{\phi}(\b{X}_T)$, and in the supervised case, by $\b{\phi}(\b{X}_1),\b{\psi}(\b{X}_1,\b{y}_1),\ldots,
\b{\phi}(\b{X}_T),\b{\psi}(\b{X}_T,\b{y}_T)$.

\section{Rule Prediction in Descriptor Space}\label{section_rule_prediction}

The descriptor of the evolved dataset is predicted by extrapolating the descriptor sequence. 
We define both a rule-following target descriptor and a family-resemblance target descriptor, inspired by Wittgenstein's rule-following and Wittgenstein's family-resemblance, respectively. 

\subsection{Rule-Following Target Descriptor}

When \(T \geq 3\), we define:
\begin{align}
\b{f}_t := \b{\phi}(\b{X}_t) \in \mathbb{R}^p,
\end{align}
and we set:
\begin{align}
& \b{\Delta f}_T := \b{f}_T - \b{f}_{T-1}, \\
& \b{\Delta f}_{T-1} := \b{f}_{T-1} - \b{f}_{T-2}.
\end{align}
The next rule-following target descriptor is obtained by second-order extrapolation:
\begin{align}
\b{f}_{\mathrm{rule}}
:=
\b{f}_T + \b{\Delta f}_T + \frac{1}{2}(\b{\Delta f}_T-\b{\Delta f}_{T-1}).
\end{align}

Similarly, in the supervised case, we define:
\begin{align}
\b{g}_t := \b{\psi}(\b{X}_t,\b{y}_t) \in \mathbb{R}^q,
\end{align}
and we set:
\begin{align}
& \b{\Delta g}_T := \b{g}_T - \b{g}_{T-1}, \\
& \b{\Delta g}_{T-1} := \b{g}_{T-1} - \b{g}_{T-2}.
\end{align}
We compute an analogous supervised rule target by second-order extrapolation:
\begin{align}
\b{g}_{\mathrm{rule}}
:=
\b{g}_T + \b{\Delta g}_T + \frac{1}{2}(\b{\Delta g}_T-\b{\Delta g}_{T-1}).
\end{align}

The second-order version of WRF assumes $T \geq 3$. If $T=2$, first-order extrapolation is used. If $T=1$, we set $\b{f}_{\mathrm{rule}} = \b{f}_1$ and $\b{g}_{\mathrm{rule}} = \b{g}_1$.

\subsection{Family-Resemblance Target Descriptor}

In addition to rule-following, we define a family-resemblance target descriptor by averaging the history:
\begin{align}
&\b{f}_{\mathrm{fam}}
:=
\sum_{t=1}^{T} \alpha_t\, \b{\phi}(\b{X}_t), \\
&\sum_{t=1}^{T}\alpha_t = 1,
\quad
\alpha_t \geq 0,
\end{align}
where $\alpha_t \in [0,1]$ is the contributing weight of the $t$-th dataset. 
Equal weighting may be used so that all previous datasets contribute equally. 

The supervised family descriptor is similarly defined as:
\begin{align}
\b{g}_{\mathrm{fam}}
:=
\sum_{t=1}^{T} \alpha_t\, \b{\psi}(\b{X}_t,\b{y}_t).
\end{align}

\section{History-based Candidate Generation}\label{section_candidate_generation}

\subsection{Shape Prediction}

The shape of the $t$-th dataset is $(n_t,d_t)$ where $n_t$ and $d_t$ are the sample size and dimensionality of $\b{X}_t$, respectively.
The shape, i.e., the sample size and dimensionality, of the next dataset are predicted from the previous sample sizes and dimensions. 

We define:
\begin{align}
& \Delta n_T := n_T - n_{T-1}, \\
& \Delta n_{T-1} := n_{T-1} - n_{T-2}.
\end{align}
For \(T \geq 3\), we use second-order extrapolation to compute the sample size of the evolved dataset:
\begin{align}
&\widehat{n}_{T+1}
:= \nonumber
\\
&\max \Big\{1, \text{round}\big(n_T + \Delta n_T + \frac{1}{2}(\Delta n_T-\Delta n_{T-1}) \big)\Big\}.
\end{align}

Likewise, we define:
\begin{align}
& \Delta d_T := d_T - d_{T-1}, \\
& \Delta d_{T-1} := d_{T-1} - d_{T-2}.
\end{align}
We use second-order extrapolation to obtain the dimensionality of the evolved dataset:
\begin{align}
&\widehat{d}_{T+1}
:= \nonumber
\\
&\max \Big\{1, \text{round}\big(d_T + \Delta d_T + \frac{1}{2}(\Delta d_T-\Delta d_{T-1}) \big)\Big\}.
\end{align}

The second-order version of WRF assumes $T \geq 3$. If $T=2$, first-order extrapolation is used. If $T=1$, the sample size and dimensionality of the dataset are used without extrapolation.

\subsection{Mapping Datasets to the Target Dimensionality}

Since the datasets may have different dimensions, we first map each dataset to the predicted target dimension \(\widehat{d}_{T+1}\). We define the map:
\begin{align}
\mathcal{A}_{t\to \widehat{d}_{T+1}}:
\mathbb{R}^{n_t\times d_t}
\to
\mathbb{R}^{n_t\times \widehat{d}_{T+1}}
\end{align}
and set:
\begin{align}
\widetilde{\b{X}}_t
:=
\mathcal{A}_{t\to \widehat{d}_{T+1}}(\b{X}_t).
\end{align}
If \(d_t=\widehat{d}_{T+1}\), then \(\widetilde{\b{X}}_t=\b{X}_t\). If \(d_t>\widehat{d}_{T+1}\), we reduce the dimension, e.g., by PCA \cite{jolliffe2016principal,ghojogh2023principal}. If \(d_t<\widehat{d}_{T+1}\), we append small random coordinates to reach the target dimension.

\subsection{Candidate Generation}

We generate a finite candidate set containing candidate evolved datasets:
\begin{align}
\mathcal{C}
=
\{\b{\Gamma}^{(1)},\ldots,\b{\Gamma}^{(M)}\},
\end{align}
using only the observed datasets \(\{\b{X}_1,\ldots,\b{X}_T\}\). 
Note that the candidate generator does not use any external dataset identities. Instead, candidates are constructed from the history using the following mechanisms.

The candidate datasets $\{\b{\Gamma}^{(1)},\ldots,\b{\Gamma}^{(M)}\}$ can be constructed by different approaches, such as balanced family candidate generation and bounded mixture candidate generation. These two approaches are explained in the following. Alternatively, other approaches can be used to generate candidates. 

\subsubsection{Balanced Family Candidate Generation}

In one approach, the balanced family candidates can be generated by sampling approximately equal numbers of points from all previous datasets. 

We randomly sample from each mapped dataset:
\begin{align}
\b{\Gamma}_t \subseteq \widetilde{\b{X}}_t,
\qquad
\forall t \in \{1,\ldots,T\},
\end{align}
with approximately equal sample sizes per dataset, i.e., the size of each $\b{\Gamma}_t$ is approximately:
\begin{align}
|\b{\Gamma}_t| \approx \frac{\widehat{n}_{T+1}}{T}.
\end{align}
The balanced family candidate is obtained by putting together all the sampled datasets and shuffling them:
\begin{align}
\b{\Gamma}
=
\operatorname{Shuffle}
\left(
\b{\Gamma}_1\cup\b{\Gamma}_2\cup\cdots\cup\b{\Gamma}_T
\right)
\in
\mathbb{R}^{\widehat{n}_{T+1}\times \widehat{d}_{T+1}}.
\end{align}

In the supervised case, the sampling within each \(\widetilde{\b{X}}_t\) is performed in a class-balanced way, so that labels present in \(\b{y}_t\) are represented as evenly as possible.

\subsubsection{Bounded Mixture Candidate Generation}
Alternatively, bounded mixture candidates can be generated by sampling from all previous datasets using random but lower-bounded mixture proportions.

We construct the mixture weights in two steps. First, each previous dataset receives a guaranteed minimum contribution \(\pi_{\min} \in (0,1)\), e.g., we can use $\pi_{\min} = 0.25$ when $T = 4$. This ensures that no previous dataset is ignored. After assigning this minimum contribution to all \(T\) datasets, the remaining mass is $1-T\pi_{\min}$.
We then distribute this remaining mass randomly among the \(T\) datasets. Therefore, each final weight has the form:
\begin{align}
\pi_t
=
\pi_{\min}
+
\text{random extra assigned to dataset } t.
\end{align}
Hence, every dataset contributes at least \(\pi_{\min}\), while the remaining contribution is random. This gives random mixture candidates without allowing one previous dataset, especially the most recent one, to dominate the candidate.

To formalize this construction, we first sample from a Dirichlet distribution \cite{bishop2006pattern,murphy2012machine} because we need a random vector of proportions whose entries are nonnegative and sum to one:
\begin{align}
\b{u}
=
[u_1,\ldots,u_T]^\top
\sim
\operatorname{Dirichlet}(1,\ldots,1).
\end{align}

Let \(\pi_{\min}\) be the minimum allowed contribution from each previous dataset. We define the vector of mixture weights:
\begin{align}
\b{\pi}
= [\pi_1,\ldots,\pi_T]^\top =
\pi_{\min}\b{1}
+
(1-T\pi_{\min})\b{u},
\end{align}
where we require $0 \leq \pi_{\min} \leq 1/T$.
Therefore:
\begin{align}
\pi_t \geq \pi_{\min},
\qquad
\sum_{t=1}^{T}\pi_t=1.
\end{align}
The number of samples taken from each previous dataset is then drawn from a multinomial distribution \cite{bishop2006pattern,murphy2012machine} as:
\begin{align}
(n_1,\ldots,n_T)
\sim
\operatorname{Multinomial}
\left(
\widehat{n}_{T+1},
\b{\pi}
\right).
\end{align}

For each previous dataset, after aligning it to the predicted target dimension, we sample:
\begin{align}
\b{\Gamma}_t \subseteq \widetilde{\b{X}}_t,
\qquad
\forall t \in \{1,\ldots,T\}.
\end{align}
The subset \(\b{\Gamma}_t\) sampled from the aligned dataset
\(\widetilde{\b{X}}_t\) has size approximately:
\begin{align}
|\b{\Gamma}_t|
\approx
\pi_t\, \widehat{n}_{T+1}.
\end{align}

The candidate dataset is then obtained by stacking and shuffling the sampled subsets:
\begin{align}
\b{\Gamma}
=
\operatorname{Shuffle}
\left(
\b{\Gamma}_1\cup\b{\Gamma}_2\cup\cdots\cup\b{\Gamma}_T
\right)
\in
\mathbb{R}^{\widehat{n}_{T+1}\times \widehat{d}_{T+1}}.
\end{align}

Unlike the balanced family candidate, the mixture proportions do not need to be equal. However, the lower bound \(\pi_t\geq \pi_{\min}\) ensures that every previous dataset contributes a non-negligible number of samples. This prevents the candidate from being dominated by a single dataset, especially the most recent dataset \(\b{X}_T\).

\subsubsection{Labels of Candidate Datasets in Supervised Case}
In the supervised case, the sampled candidate datasets have their labels \(\{\b{\xi}_1,\ldots,\b{\xi}_M\}\), so the generated candidate datasets are tuples of data and labels:
\begin{align}
\mathcal{C}
=
\{(\b{\Gamma}^{(1)}, \b{\xi}^{(1)}),\ldots,(\b{\Gamma}^{(M)}, \b{\xi}^{(M)})\},
\end{align}
where $\b{\xi}^{(t)}$ denotes the labels of the candidate dataset $\b{\Gamma}^{(t)}$.

\subsection{Noise Injection and Shape Jitter}

\subsubsection{Noise Injection}

After candidates are generated using either balanced family generation or bounded mixture generation, a small Gaussian noise is added to each candidate so that generated datasets are not merely exact resamplings of previous data. 
\begin{align}
\b{\Gamma} \leftarrow \b{\Gamma} + \b{\epsilon},
\qquad
\b{\epsilon} \sim \mathcal{N}(\b{0},\sigma^2 \b{I}),
\end{align}
where $\sigma^2$ is the variance of noise and $\b{I}$ denotes the identity matrix. 
Noise injection is also a common strategy in data augmentation and regularization \cite{bishop1995training,chen2023unified}.

\subsubsection{Shape Jitter (Optional)}

Although the predicted next matrix shape is $(\widehat{n}_{T+1},\widehat{d}_{T+1})$, we may optionally allow candidate datasets to have small deviations from this predicted shape. That is, a candidate:
\begin{align}
\b{\Gamma}\in\mathbb{R}^{n(\b{\Gamma})\times d(\b{\Gamma})},
\end{align}
may satisfy:
\begin{align}
n(\b{\Gamma})\neq \widehat{n}_{T+1}
\qquad\text{or}\qquad
d(\b{\Gamma})\neq \widehat{d}_{T+1}.
\end{align}
We call this optional variation shape jitter. It increases diversity in the candidate pool by allowing the algorithm to explore nearby sample sizes and ambient dimensions. 

\section{Candidate Scoring}\label{section_candidate_scoring}

Each candidate \(\b{\Gamma} \in \mathcal{C}\) in unsupervised case, or  \((\b{\Gamma}, \b{\xi}) \in \mathcal{C}\) in supervised case, is scored by a weighted loss. 

\subsection{Unsupervised Scores}

The unsupervised components of the loss are:
\begin{align}
&\mathcal{L}_{\mathrm{rule}}(\b{\Gamma})
:=
D\big(\b{\phi}(\b{\Gamma}), f_{\mathrm{rule}}\big), \\
&\mathcal{L}_{\mathrm{fam}}(\b{\Gamma})
:=
D\big(\b{\phi}(\b{\Gamma}), f_{\mathrm{fam}}\big), \\
&\mathcal{L}_{\mathrm{last}}(\b{\Gamma})
:=
D\big(\b{\phi}(\b{\Gamma}), \b{\phi}(\b{X}_T)\big),
\end{align}
where \(D(\cdot,\cdot)\) is a scale-normalized descriptor distance, defined as:
\begin{align}\label{equation_D_a_b_loss}
D(\b{a},\b{b})
:=
\frac{1}{r}
\sum_{\ell=1}^{r}
\left(
\frac{a_\ell-b_\ell}
{|a_\ell|+|b_\ell|+\varepsilon}
\right)^2,
\end{align}
for two descriptor vectors \(\b{a},\b{b}\in\mathbb{R}^r\), where $\varepsilon$ is a small positive number for stability of division, $a_\ell$ and $b_\ell$ denote the $\ell$-th element of vector, and we have \(r=p\) for \(\b{\phi}\)-descriptors.

When using shape jitter, the candidates whose matrix shapes deviate too far from the predicted shape should be penalized. Therefore, we include the shape loss:
\begin{align}
\mathcal{L}_{\mathrm{shape}}(\b{\Gamma})
:=
\left(
\frac{n(\b{\Gamma})-\widehat{n}_{T+1}}{\widehat{n}_{T+1}}
\right)^2
+
\left(
\frac{d(\b{\Gamma})-\widehat{d}_{T+1}}{\widehat{d}_{T+1}}
\right)^2.
\end{align}
When shape jitter is not used, all candidates are constructed with $n(\b{\Gamma})=\widehat{n}_{T+1}$ and $d(\b{\Gamma})=\widehat{d}_{T+1}$, and hence $\mathcal{L}_{\mathrm{shape}}(\b{\Gamma})=0$.

We also include a non-collapse penalty to make sure that the evolved dataset does not collapse to a single point:
\begin{align}
\mathcal{L}_{\mathrm{col}}(\b{\Gamma})
:=
\frac{1}{\operatorname{std}\{ \|\b{\gamma}_i-\b{\gamma}_j\|_2 \mid i<j\}+\varepsilon},
\end{align}
where $\operatorname{std}(.)$ is the standard deviation operator, $\b{\gamma}_i$ is the $i$-th data instance in the dataset $\b{\Gamma}$, and $\|.\|_2$ denotes the $\ell_2$ norm, and $\varepsilon$ is a small positive number for stability of division.

\subsection{Supervised Scores}

In the supervised case, the additional label-aware losses are:
\begin{align}
&\mathcal{L}_{\mathrm{sup\text{-}rule}}(\b{\Gamma})
:=
D\big(\b{\psi}(\b{\Gamma},\b{\xi}), \b{g}_{\mathrm{rule}}\big), \\
&\mathcal{L}_{\mathrm{sup\text{-}fam}}(\b{\Gamma})
:=
D\big(\b{\psi}(\b{\Gamma},\b{\xi}), \b{g}_{\mathrm{fam}}\big), \\
&\mathcal{L}_{\mathrm{sup\text{-}last}}(\b{\Gamma})
:=
D\big(\b{\psi}(\b{\Gamma},\b{\xi}), \b{\psi}(\b{X}_T,\b{y}_T)\big),
\end{align}
where the loss function $D(.,.)$ is defined in Eq. (\ref{equation_D_a_b_loss}), in which we have \(r=q\) for \(\b{\psi}\)-descriptors.

\subsection{Normalizing the Scores}

Since different loss terms can have different numerical scales, each loss is normalized over the candidate pool. If \(\mathcal{L}_k\) is one of the loss components described above, its normalized version is:
\begin{align}
\widetilde{\mathcal{L}}_k(\b{\Gamma})
=
\operatorname{clip}
\left(
\frac{\mathcal{L}_k(\b{\Gamma})-Q_{0.05}(\mathcal{L}_k)}
     {Q_{0.95}(\mathcal{L}_k)-Q_{0.05}(\mathcal{L}_k)+\varepsilon},
0,1
\right).
\end{align}
Here, \(Q_{0.05}\) and \(Q_{0.95}\) are the empirical 5th and 95th percentiles over the candidate pool. The constant $\varepsilon$ is a small positive number for stability of division, and $\operatorname{clip}(\cdot,0,1)$ restricts the value to the range $[0,1]$.

\subsection{Total Candidate Score}

The total candidate loss score is then:
\begin{align}
\mathcal{S}(\b{\Gamma})
=
\frac{
\sum_k \lambda_k \widetilde{\mathcal{L}}_k(\b{\Gamma})
}{
\sum_k \lambda_k + \varepsilon
},
\end{align}
where $\varepsilon$ is a small positive number for stability of division and:
\begin{align}
\lambda_k \in [0,1],
\end{align}
are user-specified importance weights for the loss functions.

In the unsupervised cases, we have $k=5$ and the loss terms $\mathcal{L}_{\mathrm{rule}}$, $\mathcal{L}_{\mathrm{fam}}$, $\mathcal{L}_{\mathrm{last}}$, $\mathcal{L}_{\mathrm{shape}}$, and $\mathcal{L}_{\mathrm{col}}$ are used.
In the supervised cases, we have $k=8$ and the loss terms $\mathcal{L}_{\mathrm{rule}}$, $\mathcal{L}_{\mathrm{fam}}$, $\mathcal{L}_{\mathrm{last}}$, $\mathcal{L}_{\mathrm{shape}}$, $\mathcal{L}_{\mathrm{col}}$, $\mathcal{L}_{\mathrm{sup\text{-}rule}}$, $\mathcal{L}_{\mathrm{sup\text{-}fam}}$, and $\mathcal{L}_{\mathrm{sup\text{-}last}}$ are used.


\subsection{Selection of Best Candidate Dataset}

After the candidate scoring step, we select the best candidate dataset:
\begin{align}
\b{\Gamma}^{(0)}
=
\arg\min_{\b{\Gamma}\in \mathcal{C}}
S(\b{\Gamma}),
\end{align}
where \(S(\b{\Gamma})\) is the candidate score defined in Eq.~(93). In the supervised case, the selected candidate also has an associated label vector, denoted by \(\b{\xi}^{(0)}\). The candidate \(\b{\Gamma}^{(0)}\) is the generated dataset before refinement.

\section{Data Refinement}\label{section_data_refinement}

The purpose of the data refinement step is to slightly adjust the coordinates of the generated data instances so that the evolved dataset better matches the rule-following and family-resemblance targets. During this step, only the data coordinates are optimized. The labels \(\b{\xi}^{(0)}\), if available, are kept fixed. Thus, the optimization variable is a dataset:
\begin{align}
\b{\Gamma}
\in
\mathbb{R}^{n(\b{\Gamma}^{(0)})\times d(\b{\Gamma}^{(0)})},
\end{align}
initialized by:
\begin{align}
\b{\Gamma} = \b{\Gamma}^{(0)}.
\end{align}

The full descriptors \(\b{\phi}\) and \(\b{\psi}\), defined in Section \ref{section_dataset_descriptors}, contain some terms which are not convenient for gradient-based optimization. For example, \(\b{\phi}\) contains empirical quantiles, \(k\)-means cluster assignments, and silhouette scores. These quantities are useful for scoring candidates, but they are not differentiable with respect to the coordinates of the generated data points. Therefore, in the refinement step, we use differentiable surrogate descriptors:
\begin{align}
\widetilde{\b{\phi}}(\b{\Gamma}),
\end{align}
and, in the supervised case:
\begin{align}
\widetilde{\b{\psi}}(\b{\Gamma},\b{\xi}).
\end{align}
This follows the general idea of differentiable optimization, where generated variables are adjusted using gradients of a differentiable objective \cite{baydin2018automatic}.

\subsection{Differentiable Unsupervised Descriptor}

The descriptor \(\widetilde{\b{\phi}}\) is a differentiable approximation of \(\b{\phi}\). It contains differentiable geometric statistics such as pairwise-distance summaries, covariance trace, and covariance-spectrum information. Similarly, \(\widetilde{\b{\psi}}\) is a differentiable approximation of \(\b{\psi}\). It contains differentiable supervised statistics such as class centroids, mean within-class distance, mean between-class distance, and class separability.

For example, let:
\begin{align}
\b{\Gamma}
=
\begin{bmatrix}
\b{\gamma}_1^\top\\
\vdots\\
\b{\gamma}_m^\top
\end{bmatrix}
\in
\mathbb{R}^{m\times d}.
\end{align}
We first standardize \(\b{\Gamma}\) feature-wise and denote the standardized dataset by \(\overline{\b{\Gamma}}\):
\begin{align}\label{equation_Gamma_bar}
\overline{\b{\Gamma}}
=
\begin{bmatrix}
\overline{\b{\gamma}}_1^\top\\
\vdots\\
\overline{\b{\gamma}}_m^\top
\end{bmatrix}.
\end{align}

Let:
\begin{align}
\mathcal{D}(\overline{\b{\Gamma}})
:=
\left\{
\|\overline{\b{\gamma}}_i-\overline{\b{\gamma}}_j\|_2
\mid
1\leq i<j\leq m
\right\},
\end{align}
be the set of pairwise distances in the standardized generated dataset. We may use the mean and standard deviation of this set:
\begin{align}
\mu_D(\b{\Gamma})
:=
\operatorname{mean}\big(\mathcal{D}(\overline{\b{\Gamma}})\big),
\end{align}
\begin{align}
\sigma_D(\b{\Gamma})
:=
\operatorname{std}\big(\mathcal{D}(\overline{\b{\Gamma}})\big).
\end{align}
Let the covariance matrix of \(\overline{\b{\Gamma}}\) be:
\begin{align}
\b{C}_{\Gamma}
:=
\frac{1}{m-1}
\overline{\b{\Gamma}}^{\top}
\overline{\b{\Gamma}}.
\end{align}
Let the normalized eigenvalues of \(\b{C}_{\Gamma}\) be:
\begin{align}
\rho_j(\b{\Gamma})
:=
\frac{
\lambda_j(\b{C}_{\Gamma})
}{
\sum_{\ell=1}^{d}\lambda_{\ell}(\b{C}_{\Gamma})+\varepsilon
},
\end{align}
where the eigenvalues are sorted in descending order. A possible differentiable unsupervised descriptor is then:
\begin{align}
\widetilde{\b{\phi}}(\b{\Gamma})
:=
\begin{bmatrix}
\mu_D(\b{\Gamma})\\
\sigma_D(\b{\Gamma})\\
\operatorname{tr}(\b{C}_{\Gamma})\\
\rho_1(\b{\Gamma})\\
\rho_2(\b{\Gamma})\\
\rho_3(\b{\Gamma})\\
\rho_4(\b{\Gamma})\\
\rho_5(\b{\Gamma})
\end{bmatrix} 
\in \mathbb{R}^8.
\end{align}
If \(d<5\), the missing eigenvalue entries are padded by zeros.

This descriptor $\widetilde{\b{\phi}}(\b{\Gamma})$ is a differentiable surrogate of the unsupervised descriptor \(\b{\phi}\). It does not contain all components of \(\b{\phi}\), because some entries of \(\b{\phi}\), such as empirical quantiles, \(k\)-means assignments, and silhouette scores, are not convenient for gradient-based optimization. Instead, \(\widetilde{\b{\phi}}\) keeps the differentiable geometric components of the descriptor: the mean and standard deviation of pairwise distances, the covariance trace, and the leading normalized covariance eigenvalues. All entries of \(\widetilde{\b{\phi}}(\b{\Gamma})\) depend on the coordinates of \(\b{\Gamma}\), and therefore all of them can contribute gradients during refinement.

We now define the refinement targets in the differentiable descriptor space. For every observed dataset \(\b{X}_t\), we define:
\begin{align}
\widetilde{\b{f}}_t
:=
\widetilde{\b{\phi}}(\b{X}_t),
\qquad
t\in\{1,\ldots,T\}.
\end{align}
The differentiable rule-following target is obtained by the same second-order extrapolation used before:
\begin{equation}
\begin{aligned}
\widetilde{\b{f}}_{\mathrm{rule}}
:=\,
&\widetilde{\b{f}}_T
+
(\widetilde{\b{f}}_T-\widetilde{\b{f}}_{T-1})
\\
&+
\frac{1}{2}
\left[
(\widetilde{\b{f}}_T-\widetilde{\b{f}}_{T-1})
-
(\widetilde{\b{f}}_{T-1}-\widetilde{\b{f}}_{T-2})
\right].
\end{aligned}
\end{equation}
for \(T\geq 3\).

The differentiable family-resemblance target is:
\begin{align}
\widetilde{\b{f}}_{\mathrm{fam}}
:=
\sum_{t=1}^{T}
\alpha_t
\widetilde{\b{f}}_t,
\end{align}
where:
\begin{align}
\sum_{t=1}^{T}\alpha_t = 1,
\qquad
\alpha_t\geq 0.
\end{align}
The differentiable last-dataset target is:
\begin{align}
\widetilde{\b{f}}_{\mathrm{last}}
:=
\widetilde{\b{f}}_T.
\end{align}

Using the descriptor distance \(D(\cdot,\cdot)\) defined in Eq. (\ref{equation_D_a_b_loss}), the unsupervised refinement objective is:
\begin{equation}\label{equation_J_unsup}
\begin{aligned}
\mathcal{J}_{\mathrm{unsup}}&(\b{\Gamma})
:= \\
&
\beta_{\mathrm{rule}}
D\big(
\widetilde{\b{\phi}}(\b{\Gamma}),
\widetilde{\b{f}}_{\mathrm{rule}}
\big)
+
\beta_{\mathrm{fam}}
D\big(
\widetilde{\b{\phi}}(\b{\Gamma}),
\widetilde{\b{f}}_{\mathrm{fam}}
\big)
\\
&
+
\beta_{\mathrm{last}}
D\big(
\widetilde{\b{\phi}}(\b{\Gamma}),
\widetilde{\b{f}}_{\mathrm{last}}
\big)
+
\beta_{\mathrm{col}}
\mathcal{L}_{\mathrm{col}}(\b{\Gamma}),
\end{aligned}
\end{equation}
where $\beta_{\mathrm{rule}}$, $\beta_{\mathrm{fam}}$, $\beta_{\mathrm{last}}$, and $\beta_{\mathrm{col}}$ are non-negative weights for the loss terms in $\mathcal{J}_{\mathrm{unsup}}(\b{\Gamma})$.
The collapse penalty in Eq. (\ref{equation_J_unsup}) is:
\begin{align}
\mathcal{L}_{\mathrm{col}}(\b{\Gamma})
:=
\frac{1}
{
\operatorname{std}\!\big(
\mathcal{D}(\overline{\b{\Gamma}})
\big)
+\varepsilon
},
\end{align}
where $\operatorname{std}(.)$ denotes the standard deviation.
This term discourages degenerate solutions in which the generated data instances collapse into an overly concentrated configuration.

\subsection{Differentiable Supervised Descriptor}

In the supervised case, the refinement step also uses a differentiable supervised descriptor $\widetilde{\b{\psi}}(\b{\Gamma},\b{\xi})$ where \(\b{\Gamma}\) is the candidate dataset and \(\b{\xi}\) is its fixed label vector. The labels are not optimized during refinement; only the coordinates of the data instances in \(\b{\Gamma}\) are optimized.

Let:
\begin{align}
\b{\Gamma}
=
\begin{bmatrix}
\b{\gamma}_1^\top\\
\vdots\\
\b{\gamma}_m^\top
\end{bmatrix}
\in
\mathbb{R}^{m\times d},
\qquad
\b{\xi}
=
[\xi_1,\ldots,\xi_m]^\top.
\end{align}
We standardize \(\b{\Gamma}\) feature-wise and denote the standardized dataset by $\overline{\b{\Gamma}}$ as in Eq. (\ref{equation_Gamma_bar}), with rows $\{\overline{\b{\gamma}}_1^\top, \dots, \overline{\b{\gamma}}_m^\top\}$.

Let the set of labels appearing in \(\b{\xi}\) be:
\begin{align}
\mathcal{C}_{\xi}
:=
\{c \mid \xi_i=c \text{ for at least one } i\},
\end{align}
and let:
\begin{align}
K_{\xi}
:=
|\mathcal{C}_{\xi}|,
\end{align}
be the number of classes. For each class \(c\in\mathcal{C}_{\xi}\), we define the class index set:
\begin{align}
I_{\xi,c}
:=
\{i \mid \xi_i=c\},
\end{align}
and the class proportion:
\begin{align}
\pi_{\xi,c}
:=
\frac{|I_{\xi,c}|}{m}.
\end{align}
The class entropy is:
\begin{align}
H_{\xi}
:=
-
\sum_{c\in\mathcal{C}_{\xi}}
\pi_{\xi,c}
\log(\pi_{\xi,c}+\varepsilon),
\end{align}
and the class imbalance is:
\begin{align}
\Delta_{\xi}
:=
\max_{c\in\mathcal{C}_{\xi}}\pi_{\xi,c}
-
\min_{c\in\mathcal{C}_{\xi}}\pi_{\xi,c}.
\end{align}
These two quantities depend only on the fixed labels \(\b{\xi}\), so they are constant with respect to the coordinates of \(\b{\Gamma}\).

For each class \(c\), we define the class centroid in the standardized candidate dataset:
\begin{align}
\widetilde{\b{a}}_{\xi,c}
:=
\frac{1}{|I_{\xi,c}|}
\sum_{i\in I_{\xi,c}}
\overline{\b{\gamma}}_i.
\end{align}
For every class with at least two instances, we define its mean within-class distance:
\begin{align}
\widetilde{w}_{\xi,c}
:=
\frac{2}{|I_{\xi,c}|(|I_{\xi,c}|-1)}
\sum_{\substack{i,j\in I_{\xi,c}\\ i<j}}
\|\overline{\b{\gamma}}_i-\overline{\b{\gamma}}_j\|_2.
\end{align}
The differentiable mean within-class distance is then:
\begin{align}
\widetilde{W}_{\xi}
:=
\frac{1}{|\mathcal{C}_{\xi}^{\geq 2}|}
\sum_{c\in\mathcal{C}_{\xi}^{\geq 2}}
\widetilde{w}_{\xi,c},
\end{align}
where:
\begin{align}
\mathcal{C}_{\xi}^{\geq 2}
:=
\{c\in\mathcal{C}_{\xi}: |I_{\xi,c}|\geq 2\}.
\end{align}
If no class has at least two instances, we set \(\widetilde{W}_{\xi}=0\).

The differentiable mean between-class distance is computed using the class centroids:
\begin{align}
\widetilde{B}_{\xi}
:=
\frac{2}{K_{\xi}(K_{\xi}-1)}
\sum_{\substack{\{c,c'\}\subset\mathcal{C}_{\xi}, c \neq c'}}
\|\widetilde{\b{a}}_{\xi,c}-\widetilde{\b{a}}_{\xi,c'}\|_2,
\end{align}
when \(K_{\xi}\geq 2\), where the sum is over unordered pairs of distinct classes. If \(K_{\xi}<2\), we set \(\widetilde{B}_{\xi}=0\).

The differentiable class separability score is:
\begin{align}
\widetilde{S}_{\xi}
:=
\frac{\widetilde{B}_{\xi}}
{\widetilde{W}_{\xi}+\varepsilon}.
\end{align}

Therefore, the differentiable supervised descriptor is:
\begin{align}
\widetilde{\b{\psi}}(\b{\Gamma},\b{\xi})
:=
\begin{bmatrix}
K_{\xi}\\
H_{\xi}\\
\Delta_{\xi}\\
\widetilde{W}_{\xi}\\
\widetilde{B}_{\xi}\\
\widetilde{S}_{\xi}
\end{bmatrix}
\in
\mathbb{R}^{6}.
\end{align}
This descriptor has the same components as the supervised descriptor \(\b{\psi}\), but it is written using differentiable operations with respect to the coordinates of \(\b{\Gamma}\). The first three entries, \(K_{\xi}\), \(H_{\xi}\), and \(\Delta_{\xi}\), are fixed during refinement because the label vector \(\b{\xi}\) is fixed. The last three entries, \(\widetilde{W}_{\xi}\), \(\widetilde{B}_{\xi}\), and \(\widetilde{S}_{\xi}\), depend on the coordinates of \(\b{\Gamma}\) and therefore contribute gradients during optimization.

We denote the differentiable supervised descriptors as:
\begin{align}
\widetilde{\b{g}}_t
:=
\widetilde{\b{\psi}}(\b{X}_t,\b{y}_t),
\qquad
t\in\{1,\ldots,T\}.
\end{align}
The corresponding supervised rule-following target is:
\begin{equation}
\begin{aligned}
\widetilde{\b{g}}_{\mathrm{rule}}
:=\,
&\widetilde{\b{g}}_T
+
(\widetilde{\b{g}}_T-\widetilde{\b{g}}_{T-1})
\\
&+
\frac{1}{2}
\left[
(\widetilde{\b{g}}_T-\widetilde{\b{g}}_{T-1})
-
(\widetilde{\b{g}}_{T-1}-\widetilde{\b{g}}_{T-2})
\right].
\end{aligned}
\end{equation}
The supervised family-resemblance target is:
\begin{align}
\widetilde{\b{g}}_{\mathrm{fam}}
:=
\sum_{t=1}^{T}
\alpha_t
\widetilde{\b{g}}_t,
\end{align}
and the supervised last-dataset target is:
\begin{align}
\widetilde{\b{g}}_{\mathrm{last}}
:=
\widetilde{\b{g}}_T.
\end{align}
The supervised refinement objective is:
\begin{align}
\mathcal{J}_{\mathrm{sup}}(\b{\Gamma},\b{\xi}^{(0)})
:=\,
&
\beta_{\mathrm{sup\text{-}rule}}
D\big(
\widetilde{\b{\psi}}(\b{\Gamma},\b{\xi}^{(0)}),
\widetilde{\b{g}}_{\mathrm{rule}}
\big)
\nonumber\\
&
+
\beta_{\mathrm{sup\text{-}fam}}
D\big(
\widetilde{\b{\psi}}(\b{\Gamma},\b{\xi}^{(0)}),
\widetilde{\b{g}}_{\mathrm{fam}}
\big)
\nonumber\\
&
+
\beta_{\mathrm{sup\text{-}last}}
D\big(
\widetilde{\b{\psi}}(\b{\Gamma},\b{\xi}^{(0)}),
\widetilde{\b{g}}_{\mathrm{last}}
\big).
\end{align}

\subsection{Full Differentiable Refinement Objective}

The full differentiable refinement objective is:
\begin{align}
\mathcal{J}(\b{\Gamma})
:=
\mathcal{J}_{\mathrm{unsup}}(\b{\Gamma})
+
\eta_{\mathrm{sup}}
\mathcal{J}_{\mathrm{sup}}(\b{\Gamma},\b{\xi}^{(0)}),
\end{align}
where:
\begin{align}
\eta_{\mathrm{sup}}
=
\begin{cases}
1, & \text{if supervised labels are used},\\
0, & \text{otherwise}.
\end{cases}
\end{align}

Starting from \(\b{\Gamma}^{(0)}\), we minimize:
\begin{align}
\min_{\b{\Gamma}}\,
\mathcal{J}(\b{\Gamma}),
\end{align}
by gradient-based optimization. 
It is possible to use the Adam optimizer, a first-order adaptive gradient method based on estimates of first and second moments of the gradients, for optimization \cite{kingma2014adam,ghojogh2026backpropagation}.
If \(\b{\Gamma}^{(r)}\) denotes the refined candidate at iteration \(r\), then:
\begin{align}
\b{\Gamma}^{(r+1)}
=
\operatorname{AdamStep}
\left(
\b{\Gamma}^{(r)},
\nabla_{\b{\Gamma}}
\mathcal{J}(\b{\Gamma}^{(r)})
\right).
\end{align}
Rather than using the last iterate $R$, we use the best iterate observed during optimization:
\begin{align}
r^{\star}
:=
\arg\min_{0\leq r\leq R}
\mathcal{J}(\b{\Gamma}^{(r)}),
\end{align}
and the final evolved dataset is:
\begin{align}
\widehat{\b{X}}_{T+1}
:=
\b{\Gamma}^{(r^{\star})}.
\end{align}
In the supervised case, its labels are:
\begin{align}
\widehat{\b{y}}_{T+1}
:=
\b{\xi}^{(0)}.
\end{align}

The refinement step therefore does not generate a new candidate from scratch. It only adjusts the coordinates of the selected candidate so that its differentiable descriptors better follow the rule and family-resemblance targets.

\section{Algorithm and Complexity Analysis}\label{section_algorithm_complexity}

In this section, we summarize the WRF data evolution method algorithmically and analyze its time and space complexities. The method consists of four main stages: descriptor computation and rule prediction, history-based candidate generation, candidate scoring, and optional differentiable refinement.

\subsection{Algorithm of WRF}

The full procedure of WRF data evolution is summarized in Algorithm~\ref{algorithm_WRF_main}. The input is a sequence of observed datasets
\[
\{\b{X}_t\}_{t=1}^{T},
\]
and, in the supervised case, a sequence of label vectors
\[
\{\b{y}_t\}_{t=1}^{T}.
\]
The output is an evolved dataset \(\widehat{\b{X}}_{T+1}\), and, in the supervised case, an evolved label vector \(\widehat{\b{y}}_{T+1}\).

Algorithm~\ref{algorithm_WRF_main} first computes the unsupervised descriptors
\[
\b{f}_t := \b{\phi}(\b{X}_t),
\qquad
t\in\{1,\ldots,T\},
\]
and, when labels are available, the supervised descriptors
\[
\b{g}_t := \b{\psi}(\b{X}_t,\b{y}_t).
\]
These descriptors summarize the structural properties of the observed datasets. The rule-following target descriptors \(\b{f}_{\mathrm{rule}}\) and \(\b{g}_{\mathrm{rule}}\) are then computed by second-order extrapolation in descriptor space. In parallel, the family-resemblance target descriptors \(\b{f}_{\mathrm{fam}}\) and \(\b{g}_{\mathrm{fam}}\) are computed as weighted averages of the historical descriptors. Thus, the algorithm uses both a rule-following principle and a family-resemblance principle.

After computing the target descriptors, Algorithm~\ref{algorithm_WRF_main} predicts the next matrix shape
\[
(\widehat{n}_{T+1},\widehat{d}_{T+1}),
\]
where \(\widehat{n}_{T+1}\) is the predicted number of samples and \(\widehat{d}_{T+1}\) is the predicted dimension of the evolved dataset. The algorithm then calls Algorithm~\ref{algorithm_WRF_candidate_generation} to construct a finite candidate pool
\[
\mathcal{C}
=
\{\b{\Gamma}^{(1)},\ldots,\b{\Gamma}^{(M)}\}.
\]
In the supervised case, the candidate pool consists of pairs:
\[
\mathcal{C}
=
\{(\b{\Gamma}^{(1)},\b{\xi}^{(1)}),\ldots,(\b{\Gamma}^{(M)},\b{\xi}^{(M)})\}.
\]

\SetAlCapSkip{0.5em}
\IncMargin{0.8em}
\begin{algorithm2e}[!t]
\DontPrintSemicolon
    \textbf{Input}: datasets $\{\b{X}_t\}_{t=1}^{T}$; optional labels $\{\b{y}_t\}_{t=1}^{T}$; number of candidates $M$.\;
    \textbf{Output}: evolved dataset $\widehat{\b{X}}_{T+1}$; optional evolved labels $\widehat{\b{y}}_{T+1}$.\;
    \;
    \For{$t=1,\ldots,T$}{
        Standardize $\b{X}_t$ feature-wise to obtain $\b{Z}_t$.\;
        Compute the unsupervised descriptor:\;
        $\qquad\b{f}_t := \b{\phi}(\b{X}_t).$\;
        \If{labels are available}{
            Compute the supervised descriptor:\;
            $\qquad\b{g}_t := \b{\psi}(\b{X}_t,\b{y}_t).$\;
        }
    }
    Compute the rule-following target descriptor:\;
    $\,\,\,\,\b{f}_{\mathrm{rule}}
    :=
    \b{f}_T
    +
    (\b{f}_T-\b{f}_{T-1})
    +
    \frac{1}{2}
    \left[
    (\b{f}_T-\b{f}_{T-1})
    -
    (\b{f}_{T-1}-\b{f}_{T-2})
    \right].$\;
    Compute the family-resemblance target descriptor: $\b{f}_{\mathrm{fam}}
    :=
    \sum_{t=1}^{T}\alpha_t\b{f}_t.$\;
    \If{labels are available}{
        Compute $\b{g}_{\mathrm{rule}}$ and $\b{g}_{\mathrm{fam}}$ analogously from $\{\b{g}_t\}_{t=1}^{T}$.\;
    }
    Predict the next matrix shape: $(\widehat{n}_{T+1},\widehat{d}_{T+1}).$\;
    Generate candidate pool $\mathcal{C}$ using Algorithm~\ref{algorithm_WRF_candidate_generation}.\;
    Score all candidates using Algorithm~\ref{algorithm_WRF_candidate_scoring}.\;
    Select the best candidate:\;
    $\qquad\b{\Gamma}^{(0)}
    :=
    \arg\min_{\b{\Gamma}\in\mathcal{C}}
    S(\b{\Gamma}).$\;
    \If{labels are available}{
        Let $\b{\xi}^{(0)}$ be the label vector associated with $\b{\Gamma}^{(0)}$.\;
    }
    \uIf{refinement is used}{
        Refine $\b{\Gamma}^{(0)}$ using Algorithm~\ref{algorithm_WRF_refinement} and set the result to $\widehat{\b{X}}_{T+1}$.\;
    }
    \Else{
        Set: $\widehat{\b{X}}_{T+1}:=\b{\Gamma}^{(0)}.$
    }
    \If{labels are available}{
        Set: $\widehat{\b{y}}_{T+1}:=\b{\xi}^{(0)}.$
    }
    \KwRet $\widehat{\b{X}}_{T+1}$ and, if supervised, $\widehat{\b{y}}_{T+1}$.\;

\caption{The main WRF algorithm for rule-following data evolution.}\label{algorithm_WRF_main}
\end{algorithm2e}
\DecMargin{0.8em}

Algorithm~\ref{algorithm_WRF_candidate_generation} generates candidates from the observed history. First, all previous datasets are mapped to the predicted target dimension \(\widehat{d}_{T+1}\). Then each candidate is generated by one of the history-based mechanisms. In balanced family generation, approximately equal numbers of samples are taken from all previous datasets. In bounded mixture generation, random mixture proportions are used, but each previous dataset receives at least a minimum contribution. This prevents a candidate from being dominated by only one historical dataset. After sampling, the selected subsets are stacked, shuffled, and perturbed by small Gaussian noise.

Once the candidate pool has been constructed, Algorithm~\ref{algorithm_WRF_candidate_scoring} scores all candidates. For each candidate \(\b{\Gamma}^{(m)}\), the algorithm computes the rule-following loss, family-resemblance loss, last-dataset loss, shape loss, and collapse loss. In the supervised case, it also computes the supervised rule-following, supervised family-resemblance, and supervised last-dataset losses. Since these loss terms may have different numerical scales, each loss type is normalized over the candidate pool. The total score
\[
S(\b{\Gamma}^{(m)})
\]
is then computed as a weighted average of the normalized losses. The selected candidate is:
\[
\b{\Gamma}^{(0)}
:=
\arg\min_{\b{\Gamma}^{(m)}\in\mathcal{C}}
S(\b{\Gamma}^{(m)}).
\]

Finally, if refinement is enabled, Algorithm~\ref{algorithm_WRF_refinement} refines the selected candidate by gradient-based optimization. This step does not generate a new candidate from scratch. Rather, it starts from \(\b{\Gamma}^{(0)}\) and adjusts its coordinates so that its differentiable descriptors better match the rule-following and family-resemblance targets. The labels, if available, are kept fixed during refinement. The best refinement iterate is returned as the final evolved dataset:
\[
\widehat{\b{X}}_{T+1}
=
\b{\Gamma}^{(r^\star)}.
\]
If refinement is not used, the final evolved dataset is simply:
\[
\widehat{\b{X}}_{T+1}
=
\b{\Gamma}^{(0)}.
\]

\SetAlCapSkip{0.5em}
\IncMargin{0.8em}
\begin{algorithm2e}[!t]
\DontPrintSemicolon
    \textbf{Input}: datasets $\{\b{X}_t\}_{t=1}^{T}$; optional labels $\{\b{y}_t\}_{t=1}^{T}$; predicted shape $(\widehat{n}_{T+1},\widehat{d}_{T+1})$; number of candidates $M$.\;
    \textbf{Output}: candidate pool $\mathcal{C}$.\;
    \;
    \For{$t=1,\ldots,T$}{
        Map $\b{X}_t$ to the predicted target dimension:\;
        $\qquad \widetilde{\b{X}}_t
        :=
        \mathcal{A}_{t\to \widehat{d}_{T+1}}(\b{X}_t).$
    }
    Initialize $\mathcal{C}:=\emptyset$.\;
    \For{$m=1,\ldots,M$}{
        Choose either balanced family generation or bounded mixture generation.\;
        \uIf{balanced family generation is selected}{
            Sample approximately equal numbers of instances from all mapped datasets: \\
            $\qquad \b{\Gamma}^{(m)}_t
            \subseteq
            \widetilde{\b{X}}_t, \qquad
            |\b{\Gamma}^{(m)}_t|
            \approx
            \frac{\widehat{n}_{T+1}}{T}.$
        }
        \ElseIf{bounded mixture generation is selected}{
            Sample: $\b{u}\sim \operatorname{Dirichlet}(1,\ldots,1).$\;
            Define: $\b{\pi}
            =
            \pi_{\min}\b{1}
            +
            (1-T\pi_{\min})\b{u}.$\;
            Draw sample counts:\;
            $(n_1,\ldots,n_T)
            \sim
            \operatorname{Multinomial}(\widehat{n}_{T+1},\b{\pi}).$\;
            Sample:\;
            $\,\,\,\,\b{\Gamma}^{(m)}_t
            \subseteq
            \widetilde{\b{X}}_t,
            \qquad
            |\b{\Gamma}^{(m)}_t|
            \approx
            \pi_t\widehat{n}_{T+1}.$
        }
        Stack and shuffle:\;
        $\qquad \b{\Gamma}^{(m)}
        :=
        \operatorname{Shuffle}
        \left(
        \b{\Gamma}^{(m)}_1
        \cup
        \cdots
        \cup
        \b{\Gamma}^{(m)}_T
        \right).$\;
        Add small Gaussian noise:\;
        $\qquad \b{\Gamma}^{(m)}
        \leftarrow
        \b{\Gamma}^{(m)}+\b{\epsilon},
        \qquad
        \b{\epsilon}\sim\mathcal{N}(\b{0},\sigma^2\b{I}).$\;
        \uIf{labels are available}{
            Construct the candidate label vector $\b{\xi}^{(m)}$ from the sampled labels.\;
            Add $(\b{\Gamma}^{(m)},\b{\xi}^{(m)})$ to $\mathcal{C}$.\;
        }
        \Else{
            Add $\b{\Gamma}^{(m)}$ to $\mathcal{C}$.\;
        }
    }
    \KwRet $\mathcal{C}$.\;

\caption{History-based candidate generation in WRF.}\label{algorithm_WRF_candidate_generation}
\end{algorithm2e}
\DecMargin{0.8em}

\SetAlCapSkip{0.5em}
\IncMargin{0.8em}
\begin{algorithm2e}[!t]
\DontPrintSemicolon
    \textbf{Input}: candidate pool $\mathcal{C}$; targets $\b{f}_{\mathrm{rule}}$, $\b{f}_{\mathrm{fam}}$; optional targets $\b{g}_{\mathrm{rule}}$, $\b{g}_{\mathrm{fam}}$; weights $\{\lambda_k\}$.\;
    \textbf{Output}: scores $\{S(\b{\Gamma}^{(m)})\}_{m=1}^{M}$.\;
    \;
    \For{each candidate $\b{\Gamma}^{(m)}\in\mathcal{C}$}{
        Compute:\;
        $\qquad \mathcal{L}_{\mathrm{rule}}(\b{\Gamma}^{(m)})
        :=
        D\big(
        \b{\phi}(\b{\Gamma}^{(m)}),
        \b{f}_{\mathrm{rule}}
        \big).$\;
        Compute:\;
        $\qquad \mathcal{L}_{\mathrm{fam}}(\b{\Gamma}^{(m)})
        :=
        D\big(
        \b{\phi}(\b{\Gamma}^{(m)}),
        \b{f}_{\mathrm{fam}}
        \big).$\;
        Compute:\;
        $\qquad \mathcal{L}_{\mathrm{last}}(\b{\Gamma}^{(m)})
        :=
        D\big(
        \b{\phi}(\b{\Gamma}^{(m)}),
        \b{\phi}(\b{X}_T)
        \big).$\;
        Compute $\mathcal{L}_{\mathrm{shape}}(\b{\Gamma}^{(m)})$ and $\mathcal{L}_{\mathrm{col}}(\b{\Gamma}^{(m)})$.\;
        \If{labels are available}{
            Compute:\;
            $\mathcal{L}_{\mathrm{sup\text{-}rule}}(\b{\Gamma}^{(m)})
            :=
            D\big(
            \b{\psi}(\b{\Gamma}^{(m)},\b{\xi}^{(m)}),
            \b{g}_{\mathrm{rule}}
            \big).$\;
            Compute:\;
            $\mathcal{L}_{\mathrm{sup\text{-}fam}}(\b{\Gamma}^{(m)})
            :=
            D\big(
            \b{\psi}(\b{\Gamma}^{(m)},\b{\xi}^{(m)}),
            \b{g}_{\mathrm{fam}}
            \big).$\;
            Compute:\;
            $\mathcal{L}_{\mathrm{sup\text{-}last}}(\b{\Gamma}^{(m)})
            :=
            D\big(
            \b{\psi}(\b{\Gamma}^{(m)},\b{\xi}^{(m)}),
            \b{\psi}(\b{X}_T,\b{y}_T)
            \big).$\;
        }
    }
    \For{each loss type $\mathcal{L}_k$}{
        Normalize $\mathcal{L}_k$ over the candidate pool:\;
        $\widetilde{\mathcal{L}}_k(\b{\Gamma}^{(m)})
        =
        \operatorname{clip}
        \left(
        \frac{
        \mathcal{L}_k(\b{\Gamma}^{(m)})-Q_{0.05}(\mathcal{L}_k)
        }{
        Q_{0.95}(\mathcal{L}_k)-Q_{0.05}(\mathcal{L}_k)+\varepsilon
        },
        0,1
        \right).$
    }
    \For{each candidate $\b{\Gamma}^{(m)}\in\mathcal{C}$}{
        Compute the total candidate score:\;
        $\qquad S(\b{\Gamma}^{(m)})
        :=
        \frac{
        \sum_k \lambda_k
        \widetilde{\mathcal{L}}_k(\b{\Gamma}^{(m)})
        }{
        \sum_k \lambda_k+\varepsilon
        }.$
    }
    \KwRet $\{S(\b{\Gamma}^{(m)})\}_{m=1}^{M}$.\;

\caption{Candidate scoring in WRF.}\label{algorithm_WRF_candidate_scoring}
\end{algorithm2e}
\DecMargin{0.8em}

\SetAlCapSkip{0.5em}
\IncMargin{0.8em}
\begin{algorithm2e}[!t]
\DontPrintSemicolon
    \textbf{Input}: selected candidate $\b{\Gamma}^{(0)}$; optional labels $\b{\xi}^{(0)}$; refinement iterations $R$; weights $\{\beta_k\}$.\;
    \textbf{Output}: refined evolved dataset $\widehat{\b{X}}_{T+1}$.\;
    \;
    Initialize the optimization variable by the selected candidate: $\b{\Gamma}^{(0)} \leftarrow \b{\Gamma}^{(0)}.$\;
    \For{$r=0,\ldots,R-1$}{
        Compute the differentiable unsupervised descriptor: $\widetilde{\b{\phi}}(\b{\Gamma}^{(r)}).$\;
        \If{labels are available}{
            Compute the differentiable supervised descriptor: $\widetilde{\b{\psi}}(\b{\Gamma}^{(r)},\b{\xi}^{(0)}).$\;
        }
        Compute the differentiable refinement objective:\;
        $\,\,\,\,\mathcal{J}(\b{\Gamma}^{(r)})
        =
        \mathcal{J}_{\mathrm{unsup}}(\b{\Gamma}^{(r)})
        +
        \eta_{\mathrm{sup}}
        \mathcal{J}_{\mathrm{sup}}(\b{\Gamma}^{(r)},\b{\xi}^{(0)}).$\;
        Update by Adam:\;
        $\b{\Gamma}^{(r+1)}
        =
        \operatorname{AdamStep}
        \left(
        \b{\Gamma}^{(r)},
        \nabla_{\b{\Gamma}}
        \mathcal{J}(\b{\Gamma}^{(r)})
        \right).$
    }
    Select the best refinement iterate:\;
    $\qquad r^{\star}
    :=
    \arg\min_{0\leq r\leq R}
    \mathcal{J}(\b{\Gamma}^{(r)}).$\;
    Set: $\widehat{\b{X}}_{T+1}
    :=
    \b{\Gamma}^{(r^{\star})}.$\;
    \KwRet $\widehat{\b{X}}_{T+1}$.\;

\caption{Dataset refinement in WRF.}\label{algorithm_WRF_refinement}
\end{algorithm2e}
\DecMargin{0.8em}

\subsection{Time Complexity of the WRF Algorithm}

In the following, we analyze the time complexity of WRF. Let:
\begin{align}
N := \sum_{t=1}^{T} n_t,
\end{align}
denote the total number of historical samples, and let:
\begin{align}
d_{\max} := \max_{1\leq t\leq T} d_t,
\end{align}
denote the maximum historical dimension. We also denote the predicted sample size and dimensionality of the evolved dataset by:
\begin{align}
\widehat{n}:=\widehat{n}_{T+1},
\qquad
\widehat{d}:=\widehat{d}_{T+1}.
\end{align}
Let \(M\) be the number of candidate datasets, and let \(R\) be the number of refinement iterations. Since the descriptor dimensions \(p\) and \(q\) are fixed constants in the WRF algorithm, they do not dominate the asymptotic complexity.

\begin{lemma}[Time complexity of descriptor computation]
For a dataset:
\[
\b{X}_t\in\mathbb{R}^{n_t\times d_t},
\]
with sample size $n_t$ and dimensionality $d_t$, the time complexity of computing the unsupervised descriptor \(\b{\phi}(\b{X}_t)\) is:
\begin{align}
\mathcal{O}
\left(
n_t^2d_t+n_td_t^2+d_t^3
\right),
\end{align}
up to the cost of the \(k\)-means iterations used in the silhouette terms. Therefore, computing the descriptors for all historical datasets costs:
\begin{align}
\mathcal{O}
\left(
\sum_{t=1}^{T}
\left(
n_t^2d_t+n_td_t^2+d_t^3
\right)
\right).
\end{align}
\end{lemma}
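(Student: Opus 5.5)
The plan is to bound the cost of each block of entries of $\b{\phi}(\b{X}_t)$ separately, in the order they appear in the descriptor, and add the bounds; the claim for all historical datasets then follows by summing over $t\in\{1,\ldots,T\}$.

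First, standardization computes $\b{\mu}_t$ and $\b{\sigma}_t$ with one pass over the entries and then rescales every entry, costing $\mathcal{O}(n_td_t)$. The entries $\log(n_t)$ and $d_t$ cost $\mathcal{O}(1)$.

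Second, the pairwise-distance block. There are $|\mathcal{D}_t|=n_t(n_t-1)/2$ distances. Each distance $\|\b{z}_{t,i}-\b{z}_{t,j}\|_2$ costs $\mathcal{O}(d_t)$, so forming $\mathcal{D}_t$ costs $\mathcal{O}(n_t^2d_t)$. The mean $m_t$ and standard deviation $s_t$ cost $\mathcal{O}(n_t^2)$. Each of the five quantiles $q_{t,\alpha}$ can be obtained by linear-time selection in $\mathcal{O}(n_t^2)$. Sorting instead would give $\mathcal{O}(n_t^2\log n_t)$, which is absorbed into $n_t^2d_t$ whenever $d_t\gtrsim\log n_t$. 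I will state the selection-based bound so the absorption needs no assumption.

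Third, the spectral block. Forming $\b{C}_t=\b{Z}_t^\top\b{Z}_t/(n_t-1)$ costs $\mathcal{O}(n_td_t^2)$, and the trace costs $\mathcal{O}(d_t)$. A full symmetric eigendecomposition of the $d_t\times d_t$ matrix costs $\mathcal{O}(d_t^3)$. From its output, $\kappa(\b{C}_t+\varepsilon\b{I})$, the normalized ratios $\rho_{t,j}$, and the top five values follow in $\mathcal{O}(d_t\log d_t)$, or $\mathcal{O}(d_t)$ for the top five. This block is therefore $\mathcal{O}(n_td_t^2+d_t^3)$.

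Fourth, the clustering block, which is the only delicate step. For each $k\in\{2,3,4,5\}$, the $k$-means run is excluded from the stated bound by the phrase ``up to the cost of the $k$-means iterations''. For reference, each iteration costs $\mathcal{O}(kn_td_t)=\mathcal{O}(n_td_t)$, so the excluded cost is $\mathcal{O}(In_td_t)$ for $I$ iterations. Given the assignments, the silhouette quantities $a^{(k)}_{t,i}$ and $d^{(k)}_{t,i}(r)$ for all $i$ and $r$ need each pairwise distance once. This costs $\mathcal{O}(n_t^2d_t)$, or $\mathcal{O}(n_t^2)$ if the distances already computed for $\mathcal{D}_t$ are reused. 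The minima $b^{(k)}_{t,i}$, the coefficients $s^{(k)}_{t,i}$ and the average cost $\mathcal{O}(kn_t)$. Since $k$ ranges over a constant-size set, the block is $\mathcal{O}(n_t^2d_t)$ beyond the $k$-means runs.

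Adding the four blocks gives $\mathcal{O}(n_td_t+n_t^2d_t+n_td_t^2+d_t^3)=\mathcal{O}(n_t^2d_t+n_td_t^2+d_t^3)$ for one dataset. Summing over $t$ gives the second display of the lemma, because each descriptor is computed independently.
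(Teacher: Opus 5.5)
Your proposal is correct and follows the same route as the paper. Both arguments bound the pairwise-distance, covariance and eigendecomposition costs by $\mathcal{O}(n_t^2d_t)$, $\mathcal{O}(n_td_t^2)$ and $\mathcal{O}(d_t^3)$, set aside the $k$-means runs because $k$ ranges over a constant set, and then sum over $t$. Your version is more careful than the paper's in two places: the paper never costs the quantiles, where your linear-time selection matters because sorting would give $\mathcal{O}(n_t^2\log n_t)$, which is not absorbed for low-dimensional data such as the $d_t=2$ examples; and it never shows that the silhouette computations, once assignments are fixed, stay within $\mathcal{O}(n_t^2d_t)$.
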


\begin{proof}
The descriptor \(\b{\phi}(\b{X}_t)\) contains pairwise-distance summaries, covariance statistics, principal variance statistics, and clustering-tendency statistics. Computing all pairwise distances among \(n_t\) samples in \(d_t\) dimensions costs:
\[
\mathcal{O}(n_t^2d_t).
\]
Computing the covariance matrix of the standardized dataset costs:
\[
\mathcal{O}(n_td_t^2),
\]
and computing its eigenvalues costs:
\[
\mathcal{O}(d_t^3).
\]
The \(k\)-means and silhouette terms are computed only for fixed values \(k\in\{2,3,4,5\}\). Since this set of \(k\)'s is constant, these terms do not change the dependence on the descriptor dimension \(p\), although they add the usual clustering iteration cost. Hence, the dominant descriptor cost for one dataset is:
\[
\mathcal{O}
\left(
n_t^2d_t+n_td_t^2+d_t^3
\right).
\]
Summing this cost over all historical datasets gives:
\[
\mathcal{O}
\left(
\sum_{t=1}^{T}
\left(
n_t^2d_t+n_td_t^2+d_t^3
\right)
\right).
\]
\end{proof}

\begin{lemma}[Time complexity of rule prediction and shape prediction]
The rule-following target descriptors, family-resemblance target descriptors, and predicted matrix shape can be computed in a time complexity:
\begin{align}
\mathcal{O}(T).
\end{align}
\end{lemma}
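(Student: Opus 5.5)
We take the descriptors $\b{f}_t=\b{\phi}(\b{X}_t)$ and $\b{g}_t=\b{\psi}(\b{X}_t,\b{y}_t)$ for $t\in\{1,\ldots,T\}$, together with the shapes $(n_t,d_t)$, as already computed; the cost of computing the descriptors was accounted for in the previous lemma. We then count arithmetic operations for each of the three outputs separately. Throughout, $p=20$ and $q=6$ are fixed constants, so any operation on a whole descriptor vector costs $\mathcal{O}(1)$.

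The rule-following targets are handled first. By its definition, $\b{f}_{\mathrm{rule}}$ depends only on $\b{f}_T$, $\b{f}_{T-1}$ and $\b{f}_{T-2}$. It needs the two differences $\b{\Delta f}_T$ and $\b{\Delta f}_{T-1}$, one further difference, a scalar multiplication by $\tfrac12$, and two additions. That is a constant number of vector operations in $\mathbb{R}^p$, so the cost is $\mathcal{O}(p)=\mathcal{O}(1)$. The same count gives $\mathcal{O}(q)=\mathcal{O}(1)$ for $\b{g}_{\mathrm{rule}}$. The degenerate cases ($T=2$, which uses first-order extrapolation, and $T=1$, which simply copies $\b{f}_1$) are cheaper still.

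The family-resemblance targets are next. $\b{f}_{\mathrm{fam}}=\sum_{t=1}^{T}\alpha_t\b{f}_t$ consists of $T$ scalar--vector products and $T-1$ vector additions in $\mathbb{R}^p$, for a total of $\mathcal{O}(Tp)=\mathcal{O}(T)$. If equal weights are used, setting $\alpha_t=1/T$ also takes $\mathcal{O}(T)$. Likewise, $\b{g}_{\mathrm{fam}}$ costs $\mathcal{O}(Tq)=\mathcal{O}(T)$. This weighted average is the only step whose cost grows with $T$, and it is what sets the overall bound.

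Finally, the predicted shape $(\widehat{n}_{T+1},\widehat{d}_{T+1})$ uses only $n_T,n_{T-1},n_{T-2}$ and $d_T,d_{T-1},d_{T-2}$. Each coordinate takes a constant number of scalar additions and subtractions, one halving, one rounding, and one $\max$, so the shape prediction costs $\mathcal{O}(1)$.

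Adding the three parts gives $\mathcal{O}(1)+\mathcal{O}(T)+\mathcal{O}(1)=\mathcal{O}(T)$, which is the claimed bound. No step is a real obstacle. The one point to state explicitly is that the bound treats $p$ and $q$ as constants and counts only the work done after the descriptors are available. Without those conventions the bound would read $\mathcal{O}(T(p+q))$.
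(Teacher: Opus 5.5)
Your proof is correct and follows essentially the same argument as the paper: constant-cost second-order extrapolation (treating $p$ and $q$ as fixed), an $\mathcal{O}(T)$ weighted average for the family-resemblance targets, and a cheap shape prediction, with descriptor computation charged to the previous lemma. The only minor difference is that the paper bounds the shape step by $\mathcal{O}(T)$ (allowing the whole size sequence to be processed), whereas you note it uses only $n_{T-2},n_{T-1},n_T$ and $d_{T-2},d_{T-1},d_T$ and is $\mathcal{O}(1)$; this does not change the final bound.
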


\begin{proof}
The rule-following targets are computed by second-order extrapolation of the descriptor sequence. Since the descriptor dimensions \(p\) and \(q\) are fixed constants, each descriptor operation costs constant time with respect to \(n_t\) and \(d_t\). Computing the family-resemblance target requires a weighted average over \(T\) descriptors, which costs \(\mathcal{O}(T)\). Similarly, predicting the next shape requires processing the sequence of sample sizes and dimensions. 
If the whole sequence is processed, this step is \(\mathcal{O}(T)\). 
Therefore, the overall cost of this stage is \(\mathcal{O}(T)\).
\end{proof}

\begin{lemma}[Time complexity of dimensionality alignment]
Mapping all historical datasets to the predicted target dimension \(\widehat{d}\) costs at most:
\begin{align}
\mathcal{O}
\left(
\sum_{t=1}^{T}
(n_td_t^2+d_t^3)
\right),
\end{align}
when PCA is used for dimensionality reduction. If only copying or padding is needed, the cost is lower.
\end{lemma}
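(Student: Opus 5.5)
The plan is to bound the cost of the map $\mathcal{A}_{t\to \widehat{d}}$ separately for each historical dataset and then sum over $t$, as in the proof of the descriptor-computation lemma. For a fixed $t$ there are three cases, following the definition of the alignment map in the section on history-based candidate generation: $d_t=\widehat{d}$, $d_t<\widehat{d}$, and $d_t>\widehat{d}$. The claimed bound should hold in each case, with the PCA case dominating.

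In the first two cases the map either returns $\b{X}_t$ unchanged or appends $\widehat{d}-d_t$ small random coordinates to each of the $n_t$ rows. The identity case costs at most $\mathcal{O}(n_td_t)$ if a copy is made, and padding costs $\mathcal{O}(n_t\widehat{d})$. In the padding case $\widehat{d}$ is not bounded by $d_t$, so I would state this bound as $\mathcal{O}(n_t\max\{d_t,\widehat{d}\})$ and note that it is linear in the output size. This is what "the cost is lower" in the statement refers to, since it involves no quadratic or cubic dependence on dimension.

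The main case is $d_t>\widehat{d}$, where PCA is used. I would break its cost into four steps:
\begin{itemize}
\item Centering the data costs $\mathcal{O}(n_td_t)$.
\item Forming the $d_t\times d_t$ covariance $\frac{1}{n_t-1}\b{X}_t^\top\b{X}_t$ costs $\mathcal{O}(n_td_t^2)$.
\item A full symmetric eigendecomposition, which gives the top $\widehat{d}$ eigenvectors, costs $\mathcal{O}(d_t^3)$.
\item Projecting onto the top $\widehat{d}$ eigenvectors costs $\mathcal{O}(n_td_t\widehat{d})$.
\end{itemize}
Since $\widehat{d}<d_t$ here, the projection cost is at most $\mathcal{O}(n_td_t^2)$. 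The total for this dataset is therefore $\mathcal{O}(n_td_t^2+d_t^3)$.

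Summing over $t\in\{1,\ldots,T\}$ gives $\mathcal{O}\big(\sum_{t=1}^{T}(n_td_t^2+d_t^3)\big)$ as an upper bound whenever PCA is used. This is the "at most" bound in the statement; the non-PCA terms are of lower order in the dimension.

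The only real obstacle is the padding case, where $\widehat{d}$ may exceed $d_t$. Its cost then depends on $\widehat{d}$ rather than $d_t$, so it cannot be absorbed into the stated sum in general. I would handle this the way the lemma's wording does: state the PCA bound for the reduction case, and note separately that padding is linear in the output size $n_t\widehat{d}$, which is below the cubic-type PCA cost in the regimes the statement considers. One could also mention that an SVD-based PCA, or a truncated eigensolver computing only $\widehat{d}$ components, gives the same or a better bound, but that refinement is not needed for the stated result.
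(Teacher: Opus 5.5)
Your proposal is correct and follows essentially the same route as the paper's proof. Both split per dataset on $d_t$ versus $\widehat{d}$, bound the PCA case by the $\mathcal{O}(n_td_t^2)$ covariance cost plus the $\mathcal{O}(d_t^3)$ eigendecomposition cost, sum over $t$, and dismiss copying and padding as cheaper.

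Your proposal is more careful than the paper in two ways: you account for centering and for the projection step (absorbed since $\widehat{d}<d_t$), and you correctly flag that padding costs $\mathcal{O}(n_t\widehat{d})$, which the stated sum need not dominate. To make that caveat quantitative, note that the shape-extrapolation rule gives $\widehat{d}\le 3d_{\max}$. All padding together therefore costs at most $\mathcal{O}(Nd_{\max})$.
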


\begin{proof}
For each dataset \(\b{X}_t\), if \(d_t>\widehat{d}\), the algorithm may reduce the dimension using PCA. Computing the covariance matrix costs \(\mathcal{O}(n_td_t^2)\), and computing the eigendecomposition costs \(\mathcal{O}(d_t^3)\). Thus, the PCA-based alignment cost for \(\b{X}_t\) is:
\[
\mathcal{O}(n_td_t^2+d_t^3).
\]
Summing over all \(T\) datasets gives:
\[
\mathcal{O}
\left(
\sum_{t=1}^{T}
(n_td_t^2+d_t^3)
\right).
\]
If \(d_t=\widehat{d}\), the dataset is copied, and if \(d_t<\widehat{d}\), the dataset is padded with small random coordinates. These cases are cheaper than PCA-based reduction.
\end{proof}

\begin{lemma}[Time complexity of candidate generation]
After dimensionality alignment, generating \(M\) candidates costs:
\begin{align}
\mathcal{O}(M\widehat{n}\widehat{d}).
\end{align}
\end{lemma}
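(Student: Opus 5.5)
The plan is to bound the cost of producing one candidate by $\mathcal{O}(\widehat{n}\widehat{d})$ and then multiply by the number of candidates $M$. The aligned datasets $\widetilde{\b{X}}_t\in\mathbb{R}^{n_t\times \widehat{d}}$ are already available, so their cost is charged to the dimensionality-alignment lemma and not counted here. I will go through the steps of Algorithm~\ref{algorithm_WRF_candidate_generation} for a fixed index $m$ in order.

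\textbf{Step 1: sample counts.} For balanced family generation, the counts $|\b{\Gamma}_t^{(m)}|\approx \widehat{n}/T$ are computed in $\mathcal{O}(T)$. For bounded mixture generation, drawing $\b{u}\sim\operatorname{Dirichlet}(1,\ldots,1)$ (for instance by normalizing $T$ exponential variates) and forming $\b{\pi}=\pi_{\min}\b{1}+(1-T\pi_{\min})\b{u}$ costs $\mathcal{O}(T)$. The multinomial draw $(n_1,\ldots,n_T)$ costs $\mathcal{O}(\widehat{n}+T)$ if done by $\widehat{n}$ categorical draws, or $\mathcal{O}(T)$ by sequential binomials.

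\textbf{Step 2: row sampling and copying.} Choosing row indices within each $\widetilde{\b{X}}_t$ costs $\mathcal{O}(|\b{\Gamma}_t^{(m)}|)$ if sampling is with replacement, or via a partial Fisher--Yates shuffle. Since $\sum_t|\b{\Gamma}_t^{(m)}|=\widehat{n}$, this totals $\mathcal{O}(\widehat{n})$. Copying the selected rows, each of length $\widehat{d}$, costs $\mathcal{O}(\widehat{n}\widehat{d})$. In the supervised case, class-balanced sampling needs the per-class index lists of each $\b{y}_t$; these can be precomputed once in $\mathcal{O}(N)$ outside the candidate loop, so the per-candidate cost is unchanged.

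\textbf{Step 3: shuffle, noise, labels.} Shuffling the $\widehat{n}$ rows is $\mathcal{O}(\widehat{n})$ index work plus $\mathcal{O}(\widehat{n}\widehat{d})$ data movement. Adding Gaussian noise requires $\widehat{n}\widehat{d}$ draws, which is $\mathcal{O}(\widehat{n}\widehat{d})$. Building the label vector $\b{\xi}^{(m)}$ is $\mathcal{O}(\widehat{n})$.

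\textbf{Step 4: summing.} One candidate therefore costs $\mathcal{O}(\widehat{n}\widehat{d}+T)$, and $M$ candidates cost $\mathcal{O}(M\widehat{n}\widehat{d}+MT)$. The main obstacle is absorbing the $\mathcal{O}(T)$ per-candidate overhead from the Dirichlet and multinomial draws, since a priori $T$ need not be bounded by $\widehat{n}\widehat{d}$.

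To handle this, I will use the design requirement that every historical dataset contributes to each candidate. The bounded-mixture lower bound $\pi_{\min}>0$ and the balanced allocation $\widehat{n}/T$ are meaningful only when $\widehat{n}\gtrsim T$. Under that condition $T\le\widehat{n}\le\widehat{n}\widehat{d}$, and the bound becomes $\mathcal{O}(M\widehat{n}\widehat{d})$. If this is judged too strong, I would state explicitly that the $\mathcal{O}(MT)$ term is treated as lower order, in the same way the paper treats $p$ and $q$ as constants. Shape jitter only perturbs $n(\b{\Gamma})$ and $d(\b{\Gamma})$ by bounded relative amounts, so it changes the bound by at most a constant factor.
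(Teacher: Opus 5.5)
Your proposal is correct and follows the same route as the paper. Both bound a single candidate by the $\mathcal{O}(\widehat{n}\widehat{d})$ cost of sampling, stacking, shuffling, and perturbing its entries, and then multiply by $M$. The paper's proof silently ignores the per-candidate $\mathcal{O}(T)$ cost of the Dirichlet and multinomial draws, so your explicit handling of it is extra care rather than a different argument: you absorb it under $\widehat{n}\gtrsim T$, or otherwise treat the resulting $\mathcal{O}(MT)$ term as lower order.
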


\begin{proof}
Each candidate dataset contains approximately \(\widehat{n}\) samples in \(\widehat{d}\) dimensions. Candidate generation samples rows from the aligned historical datasets, stacks the sampled rows, shuffles them, and adds small Gaussian noise. These operations are linear in the number of entries of a candidate matrix, namely:
\[
\mathcal{O}(\widehat{n}\widehat{d}),
\]
per candidate. Since there are \(M\) candidates, the total candidate generation cost is:
\[
\mathcal{O}(M\widehat{n}\widehat{d}).
\]
\end{proof}

\begin{lemma}[Time complexity of candidate scoring]
Scoring \(M\) candidate datasets costs:
\begin{align}
\mathcal{O}
\left(
M
\left(
\widehat{n}^2\widehat{d}
+
\widehat{n}\widehat{d}^2
+
\widehat{d}^3
\right)
\right).
\end{align}
\end{lemma}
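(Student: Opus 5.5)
The plan is to bound the cost of scoring a single candidate $\b{\Gamma}^{(m)}\in\mathbb{R}^{\widehat{n}\times\widehat{d}}$, multiply by $M$, and then account for the pool-wide normalization and final weighted sum separately. For one candidate, the dominant work is evaluating the full descriptor $\b{\phi}(\b{\Gamma}^{(m)})$. Since $\b{\Gamma}^{(m)}$ is a dataset with $\widehat{n}$ rows and $\widehat{d}$ columns, I will invoke the lemma on the time complexity of descriptor computation with $n_t=\widehat{n}$ and $d_t=\widehat{d}$. This gives $\mathcal{O}(\widehat{n}^2\widehat{d}+\widehat{n}\widehat{d}^2+\widehat{d}^3)$, up to the same $k$-means iteration caveat as in that lemma. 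I will adopt that caveat here as well, since $k\in\{2,3,4,5\}$ is a constant set.

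Next I will bound the remaining per-candidate terms and check that none exceeds this. The three losses $\mathcal{L}_{\mathrm{rule}},\mathcal{L}_{\mathrm{fam}},\mathcal{L}_{\mathrm{last}}$ are evaluations of $D(\cdot,\cdot)$ on $p$-dimensional vectors. The targets $\b{f}_{\mathrm{rule}},\b{f}_{\mathrm{fam}},\b{\phi}(\b{X}_T)$ are already available from the earlier stages, so these losses cost $\mathcal{O}(p)=\mathcal{O}(1)$. The shape loss is $\mathcal{O}(1)$. The collapse loss needs the standard deviation of all pairwise distances, which costs $\mathcal{O}(\widehat{n}^2\widehat{d})$; these distances can in fact be reused from the descriptor computation.

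In the supervised case, $\b{\psi}(\b{\Gamma}^{(m)},\b{\xi}^{(m)})$ needs several pieces:
\begin{itemize}
\item class counts, entropy and imbalance, costing $\mathcal{O}(\widehat{n})$;
\item centroids, costing $\mathcal{O}(\widehat{n}\widehat{d})$;
\item within-class pairwise distances, costing at most $\mathcal{O}(\widehat{n}^2\widehat{d})$, since $\sum_c |I_c|^2\le \widehat{n}^2$;
\item between-centroid distances, costing $\mathcal{O}(K^2\widehat{d})\subseteq\mathcal{O}(\widehat{n}^2\widehat{d})$, since $K\le\widehat{n}$.
\end{itemize}
The three supervised $D$ evaluations are $\mathcal{O}(q)=\mathcal{O}(1)$. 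So each candidate costs $\mathcal{O}(\widehat{n}^2\widehat{d}+\widehat{n}\widehat{d}^2+\widehat{d}^3)$, and all $M$ candidates together cost $M$ times that.

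Finally, I will handle the normalization step. For each of the constantly many loss types, it needs the empirical $5$th and $95$th percentiles over $M$ values. These cost $\mathcal{O}(M\log M)$ by sorting, or $\mathcal{O}(M)$ by selection, followed by $\mathcal{O}(M)$ clipping. The total score is another $\mathcal{O}(M)$. Using linear-time selection, these $\mathcal{O}(M)$ terms are absorbed into the main bound because $\widehat{n},\widehat{d}\ge 1$. This yields the stated complexity.

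The main obstacle is the percentile step. If sorting is used instead of selection, an $M\log M$ term appears that is not formally dominated when $M$ is huge relative to $\widehat{n}^2\widehat{d}$. I would state explicitly that linear-time selection is used, or else note that $\log M$ is negligible. The $k$-means caveat will be inherited verbatim from the descriptor lemma.
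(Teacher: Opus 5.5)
Your proposal is correct and follows the same route as the paper. You bound the per-candidate descriptor cost by the pairwise-distance, covariance and eigenvalue terms, $\mathcal{O}(\widehat{n}^2\widehat{d}+\widehat{n}\widehat{d}^2+\widehat{d}^3)$, multiply by $M$, and treat the percentile normalization over the pool as an $\mathcal{O}(M)$-per-loss, lower-order term. Your extra bookkeeping (the supervised descriptor $\b{\psi}$, the collapse loss, the inherited $k$-means caveat, and linear-time selection for the percentiles) is more careful than the paper's write-up but does not change the argument.
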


\begin{proof}
Each candidate \(\b{\Gamma}^{(m)}\in\mathbb{R}^{\widehat{n}\times\widehat{d}}\) is scored by computing its descriptor and comparing it to the rule-following, family-resemblance, and last-dataset targets. The descriptor computation for one candidate requires pairwise-distance summaries, covariance statistics, and covariance-spectrum statistics. The pairwise-distance computation costs:
\[
\mathcal{O}(\widehat{n}^2\widehat{d}),
\]
the covariance computation costs:
\[
\mathcal{O}(\widehat{n}\widehat{d}^2),
\]
and the covariance eigenvalue computation costs:
\[
\mathcal{O}(\widehat{d}^3).
\]
Therefore, scoring one candidate costs:
\[
\mathcal{O}
\left(
\widehat{n}^2\widehat{d}
+
\widehat{n}\widehat{d}^2
+
\widehat{d}^3
\right).
\]
Multiplying by \(M\) gives:
\[
\mathcal{O}
\left(
M
\left(
\widehat{n}^2\widehat{d}
+
\widehat{n}\widehat{d}^2
+
\widehat{d}^3
\right)
\right).
\]
The percentile normalization of the loss values over the candidate pool costs only \(\mathcal{O}(M)\) for each fixed loss component, which is lower-order compared to descriptor computation.
\end{proof}

\begin{lemma}[Time complexity of differentiable refinement]
If the optional refinement step is used for \(R\) iterations, its time complexity is approximately:
\begin{align}
\mathcal{O}
\left(
R
\left(
\widehat{n}^2\widehat{d}
+
\widehat{n}\widehat{d}^2
+
\widehat{d}^3
\right)
\right).
\end{align}
If refinement is not used, this term is removed.
\end{lemma}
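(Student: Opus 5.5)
The plan is to bound the cost of a single refinement iteration and multiply by $R$. One iteration of Algorithm~\ref{algorithm_WRF_refinement} has three parts. It evaluates the differentiable objective $\mathcal{J}(\b{\Gamma}^{(r)})$ in a forward pass, computes $\nabla_{\b{\Gamma}}\mathcal{J}$ by reverse-mode automatic differentiation, and performs one Adam step. The refined variable keeps the shape of $\b{\Gamma}^{(0)}$, namely $\widehat{n}\times\widehat{d}$ up to optional shape jitter, which we treat as a constant-factor change. So each part can be bounded in terms of $\widehat{n}$ and $\widehat{d}$ only, mirroring the proof of the candidate-scoring lemma.

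\textbf{Forward pass.} We go through the components of $\widetilde{\b{\phi}}$ and $\widetilde{\b{\psi}}$.
\begin{itemize}
\item Feature-wise standardization producing $\overline{\b{\Gamma}}$ costs $\mathcal{O}(\widehat{n}\widehat{d})$.
\item The pairwise distance set $\mathcal{D}(\overline{\b{\Gamma}})$ costs $\mathcal{O}(\widehat{n}^2\widehat{d})$. Its mean and standard deviation, and hence $\mathcal{L}_{\mathrm{col}}$, add only $\mathcal{O}(\widehat{n}^2)$.
\item The covariance $\b{C}_\Gamma$ costs $\mathcal{O}(\widehat{n}\widehat{d}^2)$, and its trace costs $\mathcal{O}(\widehat{d})$.
\item The eigendecomposition giving $\rho_1,\ldots,\rho_5$ costs $\mathcal{O}(\widehat{d}^3)$.
\item In the supervised case, the class centroids cost $\mathcal{O}(\widehat{n}\widehat{d})$.
\item The within-class pairwise distances are a subset of all pairs, so they cost at most $\mathcal{O}(\widehat{n}^2\widehat{d})$.
\item The between-class centroid distances cost $\mathcal{O}(K_\xi^2\widehat{d})$. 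Since $K_\xi\le\widehat{n}$, this is also at most $\mathcal{O}(\widehat{n}^2\widehat{d})$.
\item $K_\xi$, $H_\xi$ and $\Delta_\xi$ depend only on the fixed labels and can be precomputed once.
\item The targets $\widetilde{\b{f}}_{\mathrm{rule}},\widetilde{\b{f}}_{\mathrm{fam}},\widetilde{\b{f}}_{\mathrm{last}}$ and their supervised analogues do not depend on $\b{\Gamma}$. They are computed once before the loop, so they contribute nothing per iteration.
\item The distances $D(\cdot,\cdot)$ act on descriptors of fixed dimension $8$ or $6$, so they cost $\mathcal{O}(1)$.
\end{itemize}
Summing these items, the forward pass costs $\mathcal{O}(\widehat{n}^2\widehat{d}+\widehat{n}\widehat{d}^2+\widehat{d}^3)$.

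\textbf{Backward pass.} Here we invoke the standard cheap-gradient principle of reverse-mode automatic differentiation \cite{baydin2018automatic}: the gradient of a scalar function costs at most a constant multiple of the cost of evaluating the function. Each primitive used above has a backward rule of the same order as its forward cost:
\begin{itemize}
\item pairwise norms backpropagate in $\mathcal{O}(\widehat{n}^2\widehat{d})$;
\item the Gram product backpropagates in $\mathcal{O}(\widehat{n}\widehat{d}^2)$;
\item the symmetric eigendecomposition backpropagates in $\mathcal{O}(\widehat{d}^3)$, through $\b{V}\,\mathrm{diag}(\bar{\b{\lambda}})\b{V}^\top$.
\end{itemize}

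\textbf{Optimizer step and best iterate.} The Adam update is elementwise on $\widehat{n}\widehat{d}$ entries. Tracking the best iterate $r^\star$ needs one comparison per iteration and one stored copy of the variable, both $\mathcal{O}(\widehat{n}\widehat{d})$. These terms are dominated by the forward cost. Multiplying the per-iteration bound by $R$ gives the claimed bound; if refinement is skipped, the term is simply absent.

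\textbf{Main obstacle.} The delicate step is justifying the backward-pass bound. Sorting the eigenvalues is only piecewise differentiable, and eigenvalue gradients are ill-defined at repeated eigenvalues. This is presumably why the statement says ``approximately''. I would handle it by counting the cost of the gradient formula wherever it is defined, which is almost everywhere. Sorting $\widehat{d}$ values adds only $\mathcal{O}(\widehat{d}\log\widehat{d})$, which does not change the order.
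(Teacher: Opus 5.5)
Your proposal is correct and follows essentially the same route as the paper. Both arguments bound one refinement iteration by the cost of evaluating the differentiable descriptors, namely $\mathcal{O}(\widehat{n}^2\widehat{d})$ for the pairwise distances, $\mathcal{O}(\widehat{n}\widehat{d}^2)$ for the covariance and $\mathcal{O}(\widehat{d}^3)$ for the spectrum, with the supervised terms no larger, and then multiply by $R$; the difference is that you explicitly justify the gradient and Adam-step costs via the cheap-gradient principle of reverse-mode differentiation, which the paper simply assumes when it states that each iteration has the same dominant cost as descriptor computation.
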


\begin{proof}
In each refinement iteration, the algorithm computes the differentiable descriptor of the current candidate and updates the coordinates by gradient-based optimization. The differentiable unsupervised descriptor contains pairwise-distance summaries, covariance trace, and covariance-spectrum information. Therefore, one refinement iteration has the same dominant costs as candidate descriptor computation:
\[
\mathcal{O}
\left(
\widehat{n}^2\widehat{d}
+
\widehat{n}\widehat{d}^2
+
\widehat{d}^3
\right).
\]
The supervised differentiable descriptor adds class-centroid, within-class, and between-class computations. These terms are lower-order than the full pairwise-distance computation when pairwise distances are explicitly used. Hence, for \(R\) refinement iterations, the total refinement cost is:
\[
\mathcal{O}
\left(
R
\left(
\widehat{n}^2\widehat{d}
+
\widehat{n}\widehat{d}^2
+
\widehat{d}^3
\right)
\right).
\]
\end{proof}

\begin{proposition}[Overall time complexity of WRF]
The overall time complexity of WRF is:
\begin{equation}
\begin{aligned}
\mathcal{O}
\Bigg(
&\sum_{t=1}^{T}
\left(
n_t^2d_t+n_td_t^2+d_t^3
\right)
\\
&+
(M+R)
\left(
\widehat{n}^2\widehat{d}
+
\widehat{n}\widehat{d}^2
+
\widehat{d}^3
\right)
\Bigg),
\end{aligned}
\end{equation}
where \(R=0\) if differentiable refinement is not used.
\end{proposition}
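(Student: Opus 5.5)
The plan is to decompose WRF, as in Algorithm~\ref{algorithm_WRF_main}, into its sequential stages: descriptor computation, rule and shape prediction, dimensionality alignment, candidate generation, candidate scoring, and optional refinement. Since these stages run one after another, the total cost is the sum of the stage costs. I will bound each stage by the corresponding lemma above and then show that the bounds which do not appear in the statement are absorbed by the ones that do.

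\textbf{Step 1: collect the stage costs.} The descriptor lemma gives $\mathcal{O}(\sum_t (n_t^2d_t+n_td_t^2+d_t^3))$. Rule and shape prediction cost $\mathcal{O}(T)$. Dimensionality alignment costs $\mathcal{O}(\sum_t(n_td_t^2+d_t^3))$. Candidate generation costs $\mathcal{O}(M\widehat{n}\widehat{d})$. Scoring costs $\mathcal{O}(M(\widehat{n}^2\widehat{d}+\widehat{n}\widehat{d}^2+\widehat{d}^3))$. Refinement costs $\mathcal{O}(R(\widehat{n}^2\widehat{d}+\widehat{n}\widehat{d}^2+\widehat{d}^3))$, or zero when it is not used, which corresponds to $R=0$.

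\textbf{Step 2: absorb the lower-order terms.}
\begin{itemize}
\item The alignment term is termwise dominated by the descriptor sum, so it disappears.
\item Since $n_t,d_t\ge 1$, each summand $n_t^2d_t+n_td_t^2+d_t^3$ is at least $1$. The descriptor sum is therefore at least $T$, which absorbs the $\mathcal{O}(T)$ prediction cost.
\item Since $\widehat{n},\widehat{d}\ge 1$, we have $\widehat{n}\widehat{d}\le \widehat{n}^2\widehat{d}$. Hence $M\widehat{n}\widehat{d}$ is absorbed by the scoring term.
\item The argmin selection over $M$ candidates costs $\mathcal{O}(M)$. The percentile normalization costs $\mathcal{O}(M)$ per loss type, or $\mathcal{O}(M\log M)$ if sorting is used. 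I will argue that the quantiles can be found by linear-time selection, so this is $\mathcal{O}(M)$ and is absorbed by the scoring term.
\end{itemize}
Adding the surviving scoring and refinement terms gives the factor $(M+R)$ in the statement.

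\textbf{Main obstacle.} The hard part is not arithmetic but being honest about hidden costs, namely the $k$-means and silhouette computations. First, the silhouette for each fixed $k$ requires the pairwise distances, which cost $\mathcal{O}(n^2d)$ and are already counted. Second, the $k$-means iterations are excluded "up to the cost of the iterations" in the descriptor lemma. I will inherit that caveat explicitly and state that the bound holds with the $k$-means iteration cost, which is $\mathcal{O}(I\,n d)$ per dataset for a constant $k\le 5$, treated as lower order under a bounded iteration count $I$.

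A similar point applies to candidate scoring. It uses the full descriptor $\b{\phi}$ including quantiles, but the quantiles of $\mathcal{O}(\widehat{n}^2)$ distances can be computed by selection in $\mathcal{O}(\widehat{n}^2)$ time, so they stay within the bound. Finally, the supervised descriptors add only centroid and within-class pairwise computations, which cost at most $\mathcal{O}(n^2d)$, so they do not change the bound.
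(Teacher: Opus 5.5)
Your argument is correct and is essentially the paper's own: add the stage costs from the preceding lemmas, absorb the $\mathcal{O}(T)$ prediction cost and the $\mathcal{O}(M\widehat{n}\widehat{d})$ generation cost into the historical-descriptor and scoring terms, and combine the scoring and refinement terms into the factor $(M+R)$. You are more explicit than the paper, whose proof never mentions the alignment stage or the final argmin and states the absorptions only verbally rather than through the inequalities $n_t,d_t,\widehat{n},\widehat{d}\ge 1$ (one caveat you inherit rather than introduce: zero-padding a dataset with $d_t<\widehat{d}$ costs $\Theta(n_t\widehat{d})$, which need not be bounded by $n_td_t^2+d_t^3$, so your absorption of the alignment cost rests on the alignment lemma's assertion that padding is cheaper than PCA).
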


\begin{proof}
The total cost is obtained by adding the costs of the main stages. Historical descriptor computation costs:
\[
\mathcal{O}
\left(
\sum_{t=1}^{T}
\left(
n_t^2d_t+n_td_t^2+d_t^3
\right)
\right).
\]
Rule prediction and shape prediction cost \(\mathcal{O}(T)\), which is lower-order compared to descriptor computation. Candidate generation costs:
\[
\mathcal{O}(M\widehat{n}\widehat{d}),
\]
which is lower-order compared to candidate scoring when pairwise-distance descriptors are computed. Candidate scoring costs:
\[
\mathcal{O}
\left(
M
\left(
\widehat{n}^2\widehat{d}
+
\widehat{n}\widehat{d}^2
+
\widehat{d}^3
\right)
\right).
\]
Optional refinement costs:
\[
\mathcal{O}
\left(
R
\left(
\widehat{n}^2\widehat{d}
+
\widehat{n}\widehat{d}^2
+
\widehat{d}^3
\right)
\right).
\]
Adding these dominant terms gives
\begin{equation*}
\begin{aligned}
\mathcal{O}
\Bigg(
&\sum_{t=1}^{T}
\left(
n_t^2d_t+n_td_t^2+d_t^3
\right)
\\
&+
(M+R)
\left(
\widehat{n}^2\widehat{d}
+
\widehat{n}\widehat{d}^2
+
\widehat{d}^3
\right)
\Bigg),
\end{aligned}
\end{equation*}
\end{proof}

\subsection{Space Complexity of the WRF Algorithm}

We now analyze the space complexity of WRF. The memory usage depends on whether all candidate datasets are stored simultaneously or candidates are generated and scored sequentially.

\begin{lemma}[Space complexity of historical data storage]
The space required to store the historical datasets is:
\begin{align}
\mathcal{O}
\left(
\sum_{t=1}^{T} n_td_t
\right).
\end{align}
If all dimension-aligned historical datasets are also stored, the additional space is:
\begin{align}
\mathcal{O}(N\widehat{d}).
\end{align}
\end{lemma}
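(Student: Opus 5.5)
The plan is to count entries directly, since storage of a dense real matrix is proportional to its number of entries. First, each historical dataset $\b{X}_t\in\mathbb{R}^{n_t\times d_t}$ is stored as a dense array of $n_td_t$ real numbers. Summing over $t\in\{1,\ldots,T\}$ gives $\mathcal{O}\big(\sum_{t=1}^{T}n_td_t\big)$ for the raw history. In the supervised case, the label vectors $\b{y}_t\in\{1,\ldots,K_t\}^{n_t}$ add $\sum_t n_t = N$ entries. Since $d_t\geq 1$, we have $N\leq\sum_t n_td_t$, so labels are absorbed into the same bound. The $T$ descriptor vectors $\b{f}_t,\b{g}_t$ have constant dimensions $p$ and $q$, so they contribute only $\mathcal{O}(T)$; this is dominated because $T\leq N$.

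Second, for the aligned datasets, Algorithm~\ref{algorithm_WRF_candidate_generation} produces $\widetilde{\b{X}}_t=\mathcal{A}_{t\to\widehat{d}}(\b{X}_t)\in\mathbb{R}^{n_t\times\widehat{d}}$. This holds in all three cases (copy, PCA reduction, random padding), because the map preserves the row count $n_t$ and outputs width $\widehat{d}$. Storing all of them requires $\sum_t n_t\widehat{d}=N\widehat{d}$ entries, which gives the stated additional $\mathcal{O}(N\widehat{d})$.

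The only step needing care is the \emph{transient} memory of the alignment map, namely the PCA case $d_t>\widehat{d}$. There the $d_t\times d_t$ covariance and its eigenvectors need $\mathcal{O}(d_t^2)$ working space. I would argue this memory is temporary and freed after each $t$, so it does not count as storage of the aligned history. Alternatively, if one insists on counting it, it is bounded by $\mathcal{O}(\max_t d_t^2)$; this is a lower-order term unless $d_t>n_t$, and in that case it can be avoided by computing PCA through the $n_t\times n_t$ Gram matrix or a thin SVD. I would state the lemma's claim as concerning persistent storage and note this caveat in one sentence. Adding the two parts then gives the lemma.
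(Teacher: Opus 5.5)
Your proposal is correct and follows essentially the same route as the paper's proof: both count $n_td_t$ entries per historical dataset and $n_t\widehat{d}$ entries per aligned dataset $\widetilde{\b{X}}_t$, which sum to $\sum_{t=1}^{T}n_td_t$ and $N\widehat{d}$. Your extra remarks on label storage, descriptor storage, and transient PCA working memory go beyond what the paper's proof addresses, but they are accurate and do not change the argument.
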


\begin{proof}
The dataset \(\b{X}_t\in\mathbb{R}^{n_t\times d_t}\) contains \(n_td_t\) scalar entries. Therefore, storing all historical datasets requires:
\[
\sum_{t=1}^{T} n_td_t,
\]
scalar entries, which gives:
\[
\mathcal{O}
\left(
\sum_{t=1}^{T} n_td_t
\right).
\]
After mapping each historical dataset to the predicted target dimension \(\widehat{d}\), the aligned datasets contain:
\[
\sum_{t=1}^{T} n_t\widehat{d}
=
N\widehat{d},
\]
entries. Thus, storing all aligned datasets requires:
\[
\mathcal{O}(N\widehat{d}),
\]
additional memory.
\end{proof}

\begin{lemma}[Space complexity of the candidate pool]
If all \(M\) candidates are stored simultaneously, the candidate pool requires:
\begin{align}
\mathcal{O}(M\widehat{n}\widehat{d}),
\end{align}
memory in the unsupervised case. In the supervised case, storing candidate labels requires an additional:
\begin{align}
\mathcal{O}(M\widehat{n}),
\end{align}
memory.
\end{lemma}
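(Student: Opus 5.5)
The plan is a direct count of stored scalar entries, in the same way as the preceding lemma on historical data storage. First I will fix what a candidate is after Algorithm~\ref{algorithm_WRF_candidate_generation} has finished. Without shape jitter, every candidate $\b{\Gamma}^{(m)}$ is built by stacking the subsets $\b{\Gamma}^{(m)}_1,\ldots,\b{\Gamma}^{(m)}_T$, whose sizes add up to $\widehat{n}$ (exactly so under the multinomial draw, and approximately so under balanced sampling), in the aligned dimension $\widehat{d}$. Hence $\b{\Gamma}^{(m)}\in\mathbb{R}^{\widehat{n}\times\widehat{d}}$, and it holds $\widehat{n}\widehat{d}$ scalars. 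Shuffling and adding Gaussian noise can be done in place, so they do not change the size of the stored object.

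The main step is to sum this over the pool. Storing all $M$ candidates at once takes $\sum_{m=1}^{M}\widehat{n}\widehat{d}=M\widehat{n}\widehat{d}$ entries, which gives $\mathcal{O}(M\widehat{n}\widehat{d})$. In the supervised case each candidate also carries a label vector $\b{\xi}^{(m)}\in\{1,\ldots,K\}^{\widehat{n}}$. That is $\widehat{n}$ integers per candidate, so $M\widehat{n}$ in total and an additional $\mathcal{O}(M\widehat{n})$. The remaining per-candidate bookkeeping has constant size: the mixture weights $\b{\pi}$ and counts $(n_1,\ldots,n_T)$ take $\mathcal{O}(T)$ and can be discarded once the candidate is built, and the $k\le 8$ loss values and the score take $\mathcal{O}(1)$. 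All of these are dominated by, or additive to, the stated terms. I will note that the temporary workspace used during scoring, such as the $\mathcal{O}(\widehat{n}^2)$ pairwise distances, does not belong to the pool itself.

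The step I expect to need care is justifying that each candidate has $\mathcal{O}(\widehat{n})$ rows when the sampling is only approximate, or when optional shape jitter is used. My first approach is to assume, as the algorithm intends, that jitter produces shapes $(n(\b{\Gamma}),d(\b{\Gamma}))$ within a constant factor of $(\widehat{n},\widehat{d})$. Under that assumption $n(\b{\Gamma})d(\b{\Gamma})=\mathcal{O}(\widehat{n}\widehat{d})$, and the same summation goes through unchanged.
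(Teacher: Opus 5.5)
Your proposal is correct and matches the paper's proof: it counts $\widehat{n}\widehat{d}$ scalar entries per candidate, multiplies by $M$, and adds a length-$\widehat{n}$ label vector per candidate in the supervised case. Your extra remarks on in-place shuffling and noise, discarded per-candidate bookkeeping, and shape jitter staying within a constant factor of $(\widehat{n},\widehat{d})$ simply make explicit what the paper covers with the phrase ``approximately $\widehat{n}$ samples and $\widehat{d}$ dimensions.''
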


\begin{proof}
Each candidate dataset has approximately \(\widehat{n}\) samples and \(\widehat{d}\) dimensions. Therefore, one candidate contains:
\[
\widehat{n}\widehat{d},
\]
scalar entries. Storing \(M\) candidates requires:
\[
M\widehat{n}\widehat{d},
\]
scalar entries, giving:
\[
\mathcal{O}(M\widehat{n}\widehat{d}).
\]
In the supervised case, each candidate also has a label vector of length \(\widehat{n}\). Storing labels for all \(M\) candidates therefore requires:
\[
\mathcal{O}(M\widehat{n}),
\]
additional memory.
\end{proof}

\begin{lemma}[Space complexity of descriptor computation]
The temporary space required to compute descriptors for one candidate is:
\begin{align}
\mathcal{O}(\widehat{n}^2+\widehat{d}^2).
\end{align}
The space required to store all historical and candidate descriptors is:
\begin{align}
\mathcal{O}\big(T(p+q)+M(p+q)\big).
\end{align}
\end{lemma}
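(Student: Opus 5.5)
We account separately for the two claims in the statement: the working memory needed while computing one candidate's descriptor, and the memory needed to keep all descriptor vectors. For the first claim, we go through the components of \(\b{\phi}(\b{\Gamma})\) and, in the supervised case, of \(\b{\psi}(\b{\Gamma},\b{\xi})\), with \(\b{\Gamma}\in\mathbb{R}^{\widehat{n}\times\widehat{d}}\), and bound the largest intermediate object each one creates. The input itself, and its standardized copy \(\overline{\b{\Gamma}}\), take \(\mathcal{O}(\widehat{n}\widehat{d})\) space. We will charge this to the candidate storage already counted in the previous lemma, so it is not part of the temporary cost here.

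\textbf{Pairwise distances.} The set \(\mathcal{D}\) has \(\widehat{n}(\widehat{n}-1)/2\) entries. Its mean, standard deviation and the five quantiles are computed from this array, which uses \(\mathcal{O}(\widehat{n}^2)\) space. The same distance matrix is reused for the silhouette scores with \(k\in\{2,3,4,5\}\). For those, \(k\)-means only keeps \(k\) centroids and \(\widehat{n}\) assignments, i.e. \(\mathcal{O}(k\widehat{d}+\widehat{n})\) space, and \(k\le 5\) is a constant. In the supervised case, \(W\), \(B\) and \(S\) reuse or sum over subsets of the same distance matrix and store only \(K\) centroids, with \(K\le\widehat{n}\). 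That is \(\mathcal{O}(K\widehat{d})\subseteq\mathcal{O}(\widehat{n}\widehat{d})\), which we again assign to input-sized storage.

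\textbf{Covariance and spectrum.} \(\b{C}_{\Gamma}\) is \(\widehat{d}\times\widehat{d}\), and a standard symmetric eigensolver runs in \(\mathcal{O}(\widehat{d}^2)\) workspace. This covers the trace, \(\kappa\) and the five values \(\rho_j\). Adding the two parts gives \(\mathcal{O}(\widehat{n}^2+\widehat{d}^2)\).

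\textbf{Stored descriptors.} Each historical dataset has one \(\b{f}_t\in\mathbb{R}^p\) and, with labels, one \(\b{g}_t\in\mathbb{R}^q\), for a total of \(T(p+q)\) numbers. Each of the \(M\) candidates also has one \(\b{\phi}\) and one \(\b{\psi}\), for \(M(p+q)\) numbers. The targets \(\b{f}_{\mathrm{rule}}\), \(\b{f}_{\mathrm{fam}}\), \(\b{g}_{\mathrm{rule}}\), \(\b{g}_{\mathrm{fam}}\) add only \(\mathcal{O}(p+q)\), which fits inside this bound.

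\textbf{Main obstacle.} The hard step is justifying that the temporary cost has no \(\widehat{n}\widehat{d}\) term. We handle it by saying explicitly that the standardized copy and the class centroids are input-sized storage, bounded by the \(\mathcal{O}(M\widehat{n}\widehat{d})\) candidate pool of the previous lemma. If we wanted a self-contained bound instead, we could use the inequality \(\widehat{n}\widehat{d}\le\frac12(\widehat{n}^2+\widehat{d}^2)\), which absorbs that term directly into \(\mathcal{O}(\widehat{n}^2+\widehat{d}^2)\).
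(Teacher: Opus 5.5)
Your proposal is correct and takes essentially the same approach as the paper: you bound the temporary memory by the $\mathcal{O}(\widehat{n}^2)$ pairwise-distance array plus the $\mathcal{O}(\widehat{d}^2)$ covariance matrix, and you count $p+q$ numbers per historical dataset and per candidate to get $\mathcal{O}\big(T(p+q)+M(p+q)\big)$. Your extra bookkeeping for the $k$-means, silhouette and supervised terms is sound and only makes explicit what the paper's shorter argument leaves implicit; the inequality $\widehat{n}\widehat{d}\le\frac{1}{2}(\widehat{n}^2+\widehat{d}^2)$ already absorbs any input-sized intermediate, so your charging argument is not needed.
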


\begin{proof}
Computing pairwise-distance summaries may require storing the pairwise-distance matrix or the list of pairwise distances. For \(\widehat{n}\) samples, this requires:
\[
\mathcal{O}(\widehat{n}^2),
\]
temporary memory. Computing covariance statistics requires storing a covariance matrix of size \(\widehat{d}\times \widehat{d}\), which requires:
\[
\mathcal{O}(\widehat{d}^2),
\]
memory. Hence, the temporary descriptor-computation memory for one candidate is:
\[
\mathcal{O}(\widehat{n}^2+\widehat{d}^2).
\]
The historical descriptors have dimension \(p\) in the unsupervised case and \(q\) in the supervised case. Since there are \(T\) historical datasets and \(M\) candidates, storing all descriptors requires:
\[
\mathcal{O}\big(T(p+q)+M(p+q)\big).
\]
Because \(p\) and \(q\) are fixed constants in the current implementation, this descriptor-storage term is usually lower-order.
\end{proof}

\begin{lemma}[Space complexity of differentiable refinement]
During differentiable refinement, the memory required for the optimization variable, its gradients, and temporary descriptor computations is approximately:
\begin{align}
\mathcal{O}
\left(
\widehat{n}\widehat{d}
+
\widehat{n}^2
+
\widehat{d}^2
\right).
\end{align}
\end{lemma}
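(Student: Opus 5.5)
I will bound the memory of the refinement stage by splitting it into three pieces: the persistent optimization state, the transient memory of one descriptor and gradient evaluation, and the fixed-size targets. I will add these and argue that nothing accumulates across the $R$ iterations.

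\emph{Persistent optimization state.} The optimization variable $\b{\Gamma}\in\mathbb{R}^{\widehat{n}\times\widehat{d}}$ has $\widehat{n}\widehat{d}$ entries. Its gradient $\nabla_{\b{\Gamma}}\mathcal{J}$ has the same shape. The Adam optimizer keeps two moment estimates of the same shape. To return $\b{\Gamma}^{(r^\star)}$, I keep only the best iterate so far together with its scalar objective value, and overwrite the running iterate. This costs one more copy, not $R$ copies. In total this is a constant number of $\widehat{n}\times\widehat{d}$ arrays, i.e., $\mathcal{O}(\widehat{n}\widehat{d})$. The fixed label vector $\b{\xi}^{(0)}$ adds $\mathcal{O}(\widehat{n})$, which is absorbed.

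\emph{Transient memory of one evaluation.} Evaluating $\widetilde{\b{\phi}}(\b{\Gamma})$ has three parts:
\begin{itemize}
\item the standardized matrix $\overline{\b{\Gamma}}$, which takes $\mathcal{O}(\widehat{n}\widehat{d})$;
\item the set $\mathcal{D}(\overline{\b{\Gamma}})$ of pairwise distances, which takes $\mathcal{O}(\widehat{n}^2)$, as in the lemma on the space complexity of descriptor computation;
\item the covariance matrix $\b{C}_\Gamma$ and its eigendecomposition, which take $\mathcal{O}(\widehat{d}^2)$.
\end{itemize}
In the supervised case, $\widetilde{\b{\psi}}$ needs the class centroids, which take $\mathcal{O}(K_\xi\widehat{d})\subseteq\mathcal{O}(\widehat{n}\widehat{d})$. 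It also needs the within-class distances, which are a subset of the pairwise distances and so fit in $\mathcal{O}(\widehat{n}^2)$. The centroid-pair distances take $\mathcal{O}(K_\xi^2)\subseteq\mathcal{O}(\widehat{n}^2)$. The targets $\widetilde{\b{f}}_{\mathrm{rule}},\widetilde{\b{f}}_{\mathrm{fam}},\widetilde{\b{f}}_{\mathrm{last}}$ and their supervised counterparts are vectors of dimension $8$ or $6$. They can be precomputed, so they cost $\mathcal{O}(1)$.

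\emph{Main obstacle: backpropagation.} Reverse-mode differentiation stores the intermediate values of the forward pass. I would argue that each stored intermediate is one of the arrays listed above, or has the same size, and that the computational graph has constant depth. Concretely, the intermediates are the standardization, the distance matrix, the covariance, the eigendecomposition, a constant number of scalar reductions, and the loss $D$. Hence the backward pass needs at most a constant factor more memory than the forward pass. The gradient of the eigendecomposition needs the eigenvector matrix, which is $\mathcal{O}(\widehat{d}^2)$. The gradient of the distance term needs an $\widehat{n}\times\widehat{n}$ adjoint, which is $\mathcal{O}(\widehat{n}^2)$.

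Finally, the graph is freed after each Adam step. So the transient memory is reused across iterations and does not grow with $R$. Summing the three pieces gives $\mathcal{O}(\widehat{n}\widehat{d}+\widehat{n}^2+\widehat{d}^2)$, which is the claimed bound.
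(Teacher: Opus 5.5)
Your proposal is correct and follows the same route as the paper's proof, which simply adds $\mathcal{O}(\widehat{n}\widehat{d})$ for the optimization variable and its gradient, $\mathcal{O}(\widehat{n}^2)$ for the stored pairwise distances, and $\mathcal{O}(\widehat{d}^2)$ for the covariance matrix. Your extra bookkeeping only makes the argument more complete: Adam's moment estimates, a single best-iterate copy, the supervised terms, backpropagation intermediates, and freeing the graph so nothing accumulates over the $R$ iterations. Keeping the autograd intermediates at $\mathcal{O}(\widehat{n}^2)$ does presume the distances are not formed via an explicit $\widehat{n}\times\widehat{n}\times\widehat{d}$ difference tensor, but the paper makes that assumption implicitly as well.
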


\begin{proof}
The optimization variable is the candidate matrix $\b{\Gamma}\in\mathbb{R}^{\widehat{n}\times\widehat{d}}$ which requires:
\[
\mathcal{O}(\widehat{n}\widehat{d}),
\]
memory. Its gradient has the same size and therefore the same order of memory. The differentiable descriptor computation may store pairwise distances among the \(\widehat{n}\) generated samples, requiring:
\[
\mathcal{O}(\widehat{n}^2),
\]
temporary memory. It may also store the covariance matrix, requiring:
\[
\mathcal{O}(\widehat{d}^2),
\]
memory. Therefore, the total refinement memory is approximately:
\[
\mathcal{O}
\left(
\widehat{n}\widehat{d}
+
\widehat{n}^2
+
\widehat{d}^2
\right).
\]
\end{proof}

\begin{proposition}[Overall space complexity of WRF]
If all candidate datasets are stored simultaneously, the total space complexity of WRF is approximately:
\begin{align}
\mathcal{O}
\left(
\sum_{t=1}^{T} n_td_t
+
N\widehat{d}
+
M\widehat{n}\widehat{d}
+
M\widehat{n}
+
\widehat{n}^2
+
\widehat{d}^2
\right).
\end{align}
In the unsupervised case, the label-storage term \(\mathcal{O}(M\widehat{n})\) is removed.
\end{proposition}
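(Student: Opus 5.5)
The plan is to mirror the proof of the overall time complexity proposition. We add up the memory of the components already bounded by the four preceding space lemmas. A stage that is computed and then discarded contributes only its peak, so its bound enters the sum once, not once per candidate or per iteration. We then keep the dominant terms.

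First, the lemma on historical data storage gives $\mathcal{O}(\sum_{t=1}^{T} n_td_t)$ for the raw datasets and $\mathcal{O}(N\widehat{d})$ for the dimension-aligned copies. Both persist through candidate generation, so both appear in the total. The lemma on the candidate pool gives $\mathcal{O}(M\widehat{n}\widehat{d})$ for storing all $M$ candidates simultaneously, which is the regime assumed in the statement. In the supervised case it adds $\mathcal{O}(M\widehat{n})$ for the candidate label vectors.

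Second, the descriptor lemma gives temporary space $\mathcal{O}(\widehat{n}^2+\widehat{d}^2)$ per candidate. The key observation is that candidates are scored one at a time. The pairwise-distance list and covariance matrix of one candidate are therefore freed before the next is processed, so this workspace is counted once rather than multiplied by $M$. We must also account for the workspace used when computing the historical descriptors. That workspace is $\mathcal{O}(n_t^2+d_t^2)$, and it is transient as well. We treat it as absorbed in the stated bound, in the same "approximately" sense the statement uses, or else we assume it is dominated by the candidate-stage terms; this is a caveat to flag. The stored descriptors cost $\mathcal{O}((T+M)(p+q))$. Because $p$ and $q$ are constants, this is $\mathcal{O}(T+M)$. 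We dominate it by $N\widehat{d}+M\widehat{n}\widehat{d}$, using $N\geq T$ and $\widehat{n},\widehat{d}\geq 1$ from the $\max\{1,\cdot\}$ in the shape prediction.

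Third, the refinement lemma gives $\mathcal{O}(\widehat{n}\widehat{d}+\widehat{n}^2+\widehat{d}^2)$. The $\widehat{n}\widehat{d}$ term is absorbed into $M\widehat{n}\widehat{d}$, since $M\geq 1$. The remaining terms coincide with the transient scoring workspace, because refinement runs after scoring and reuses that memory; they contribute only a single $\widehat{n}^2+\widehat{d}^2$. Summing the surviving terms yields the stated expression. In the unsupervised case no label vectors exist, so the $M\widehat{n}$ term is dropped.

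The main obstacle is the bookkeeping of transient versus persistent memory. That is, we must argue carefully that the quadratic workspaces, together with the Adam moment buffers of size $\mathcal{O}(\widehat{n}\widehat{d})$, are reused rather than accumulated. This is what justifies writing $\widehat{n}^2+\widehat{d}^2$ without a factor of $M$ or $R$. I would make this explicit by noting that Algorithm~\ref{algorithm_WRF_candidate_scoring} processes candidates in a loop, and that the iterates of Algorithm~\ref{algorithm_WRF_refinement} overwrite the previous ones. Only the best iterate so far needs to be kept, which costs an extra $\mathcal{O}(\widehat{n}\widehat{d})$.
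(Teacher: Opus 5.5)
Your proposal is correct and follows the same route as the paper, whose proof simply adds the bounds from the four preceding space lemmas; your explicit treatment of sequential scoring, overwritten refinement iterates, and the $\mathcal{O}((T+M)(p+q))$ descriptor storage (absorbed via $N\geq T$ and $\widehat{n},\widehat{d}\geq 1$) only makes explicit what the paper leaves implicit. The caveat you flag is genuine, and the paper's proof omits it as well: under the same accounting used for candidates, the historical descriptors need $\mathcal{O}(\max_t(n_t^2+d_t^2))$ transient workspace, which the stated expression need not dominate when $\widehat{n}$ and $\widehat{d}$ are much smaller than the historical sizes (e.g.\ sharply decreasing $n_t$), so the bound as stated requires either an assumption such as $\widehat{n}\geq\max_t n_t$ and $\widehat{d}\geq d_{\max}$ (which holds in the paper's experiments) or an implementation that computes those statistics without materializing the full distance list and covariance matrix.
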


\begin{proof}
The historical datasets require:
\[
\mathcal{O}
\left(
\sum_{t=1}^{T} n_td_t
\right),
\]
memory. The aligned historical datasets require:
\[
\mathcal{O}(N\widehat{d}),
\]
memory if stored. The candidate pool requires:
\[
\mathcal{O}(M\widehat{n}\widehat{d}),
\]
memory. In the supervised case, storing all candidate labels requires:
\[
\mathcal{O}(M\widehat{n}),
\]
additional memory. Descriptor computation and refinement may require temporary memory:
\[
\mathcal{O}(\widehat{n}^2+\widehat{d}^2).
\]
Adding these terms gives:
\[
\mathcal{O}
\left(
\sum_{t=1}^{T} n_td_t
+
N\widehat{d}
+
M\widehat{n}\widehat{d}
+
M\widehat{n}
+
\widehat{n}^2
+
\widehat{d}^2
\right).
\]
If labels are not used, the term \(\mathcal{O}(M\widehat{n})\) is removed.
\end{proof}

\begin{remark}[Sequential candidate generation]
If candidates are generated and scored sequentially rather than stored simultaneously, the candidate-pool memory can be reduced from:
\[
\mathcal{O}(M\widehat{n}\widehat{d}),
\]
to:
\[
\mathcal{O}(\widehat{n}\widehat{d}),
\]
apart from the memory needed to store candidate scores and the best candidate found so far.
\end{remark}

\section{Simulations}\label{section_simulations}

We evaluated the proposed WRF data evolution method using both synthetic datasets \cite{pedregosa2011scikit} and the MNIST digit dataset \cite{lecun1998gradient}.
The Python code of simulations is available in \url{https://github.com/aghojogh/WRF_data_evolution}. 

\subsection{Simulations on Synthetic Datasets}

The synthetic datasets used in the simulations are common benchmark structures for evaluating clustering, nonlinear representation, and generative behavior in machine learning \cite{pedregosa2011scikit}.

As shown in Fig. \ref{figure_toy_experiment1}, we simulated the WRF algorithm on a sequence of synthetic datasets:
\begin{align*}
\b{X}_1 = \text{Moons},\, \b{X}_2 = \text{Blobs},\, \b{X}_3 = \text{Circles}, ...
\end{align*}
The goal is to evolve data to obtain $\b{X}_4$. 
The results of data evolution by the WRF algorithm are illustrated in Fig. \ref{figure_toy_experiment1} for both supervised and unsupervised settings, where the two-dimensional plots of datasets are projections of the datasets onto their first and second PCA components \cite{ghojogh2023principal}. In the supervised WRF, the labels are used in the algorithm and the classes are colored by different colors in the plots. 
As shown in the figure, data refinement has modified the evolved dataset slightly using gradient-descent Adam optimization. 

The sample size and dimensionality of the datasets are as follows in this experiment:
\begin{itemize}\itemsep0em 
\item $\b{X}_1$: $n_1 = 800, d_1 = 2$
\item $\b{X}_2$: $n_2 = 1100, d_2 = 3$
\item $\b{X}_3$: $n_3 = 1400, d_3 = 4$
\end{itemize}
The sample size and dimensionality of the evolved data $\b{X}_4$ are $n_4 = 1700$ and $d_4 = 5$, which is natural given the increasing trend in the sequence.

As shown on the right-hand side of Fig. \ref{figure_toy_experiment1}, the evolved dataset $\b{X}_4$ contains the patterns from the datasets $\b{X}_1$, $\b{X}_2$, and $\b{X}_3$. 
The evolved dataset has the patterns of moons (shown in red curves), blobs (shown in green curves), and circles (shown in blue curve).

We did another simulation with another sequence of datasets, as depicted in Fig. \ref{figure_toy_experiment2}. The datasets in this sequence are:
\begin{align*}
\b{X}_1 = \text{S curve},\, \b{X}_2 = \text{Moons},\, \b{X}_3 = \text{Blobs}, ...
\end{align*}
The sample size and dimensionality of the datasets are similar to the previous simulation. Again, the sample size and dimensionality of the evolved data $\b{X}_4$ are $n_4 = 1700$ and $d_4 = 5$, which are expected.

The evolved dataset $\b{X}_4$ is illustrated in Fig. \ref{figure_toy_experiment2}. 
As shown in the figure, data refinement has modified the evolved dataset by deforming the geometry of the evolution slightly. 
As shown on the right-hand side of this figure, the evolved dataset $\b{X}_4$ contains the patterns from the datasets $\b{X}_1$, $\b{X}_2$, and $\b{X}_3$, i.e., it has the patterns of S curve (shown in orange and purple curves), moons (shown in red curves), and blobs (shown in green curves).

\begin{figure*}[!t]
\centering
\includegraphics[width=6.7in]{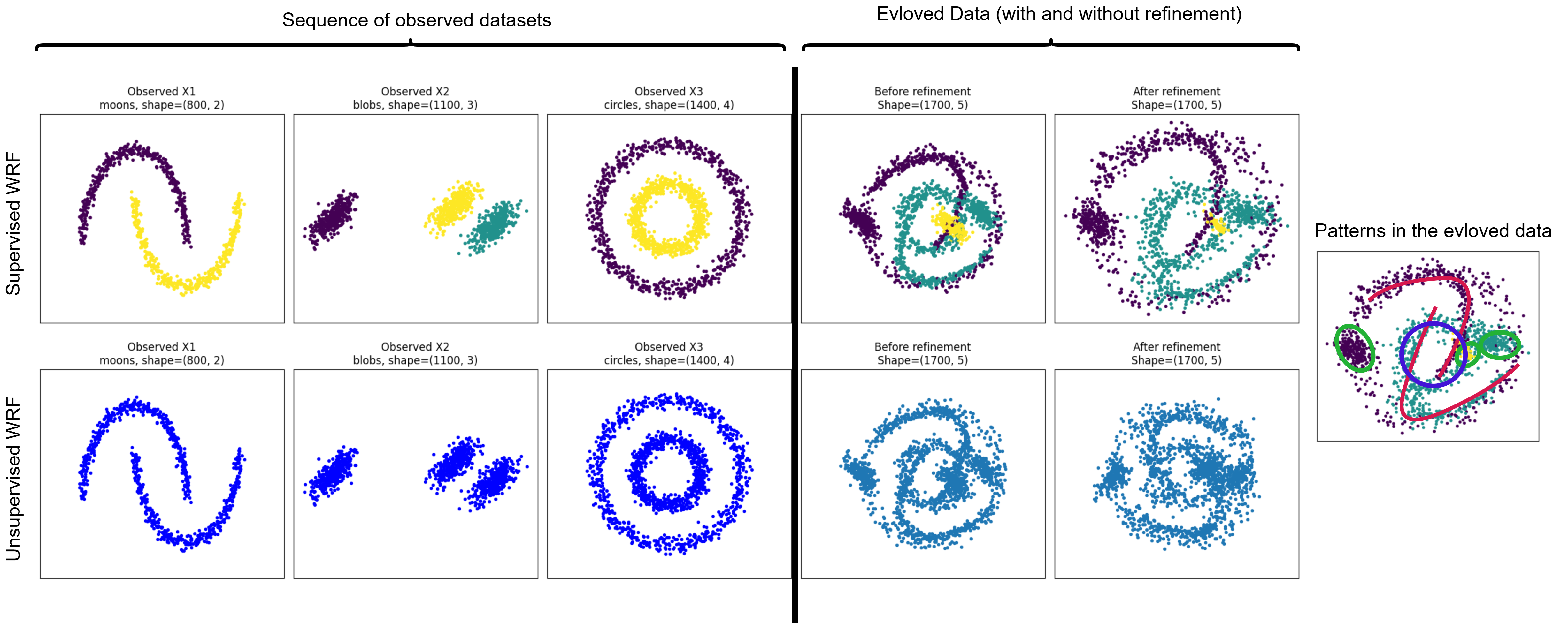}
\caption{Simulation of the WRF data evolution algorithm on a sequence of synthetic datasets, (1) moons, (2) blobs, and (3) circles, in both unsupervised and supervised settings. 
The two-dimensional plots of datasets are projections of the datasets onto their first and second PCA components.
As shown on the right-hand side, the evolved dataset $\b{X}_4$ contains the patterns from the datasets $\b{X}_1$, $\b{X}_2$, and $\b{X}_3$, i.e., it has the patterns of moons (shown in red curves), blobs (shown in green curves), and circles (shown in blue curve).}
\label{figure_toy_experiment1}
\end{figure*}

\begin{figure*}[!t]
\centering
\includegraphics[width=6.7in]{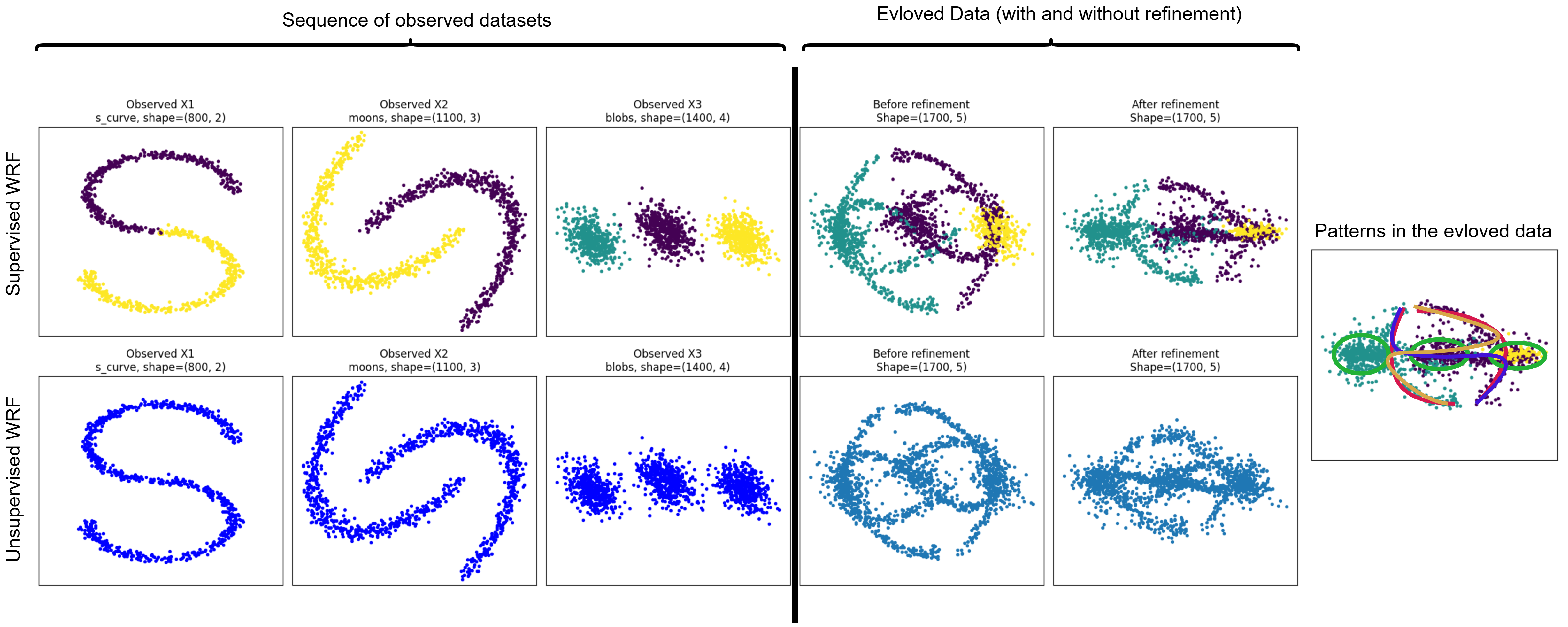}
\caption{Simulation of the WRF data evolution algorithm on a sequence of synthetic datasets, (1) S curve, (2) moons, and (3) blobs, in both unsupervised and supervised settings. 
The two-dimensional plots of datasets are projections of the datasets onto their first and second PCA components.
As shown on the right-hand side, the evolved dataset $\b{X}_4$ contains the patterns from the datasets $\b{X}_1$, $\b{X}_2$, and $\b{X}_3$, i.e., it has the patterns of S curve (shown in orange and purple curves), moons (shown in red curves), and blobs (shown in green curves).}
\label{figure_toy_experiment2}
\end{figure*}

\subsection{Simulations on MNIST Digit Dataset}

We also evaluated the WRF data evolution algorithm using the MNIST digit dataset \cite{lecun1998gradient}. 
We created a sequence of digit datasets as follows:
\begin{itemize}\itemsep0em 
\item $\b{X}_1$: digits $\{0,1\}$, sample size: $600$, dimensionality: $28 \times 28 = 784$ 
\item $\b{X}_2$: digits $\{4,9\}$, sample size: $600$, dimensionality: $28 \times 28 = 784$ 
\item $\b{X}_3$: digits $\{6,7,8\}$, sample size: $900$, dimensionality: $28 \times 28 = 784$ 
\end{itemize}
For this experiment, we do not let WRF decide about the dimensionality because the images in the evolved dataset must be of dimensionality $28 \times 28 = 784$.

We performed WRF data evolution for this sequence of datasets in both unsupervised and supervised settings. 
The evolved datasets are shown in Fig. \ref{figure_mnist_experiment_unsupervised} and Fig. \ref{figure_mnist_experiment_supervised}, respectively, for unsupervised and supervised settings.
As shown in these figures, data refinement has slightly modified the evolved dataset by gradient-based optimization. 

The sample size of the evolved data, in both unsupervised and supervised settings, is $1350$, which is reasonable because the sequence exhibits an increasing pattern in sample sizes. Therefore, the sample size of the evolved data is expected to be greater than $900$.

The evolved data in the unsupervised WRF contains digits $\{1,4,6,7,9\}$, while the evolved data in the supervised WRF contains digits $\{0,1,4,6,7,8\}$. The supervised evolved dataset contains more digit classes because label information is used during the evolution process. 

\begin{figure*}[!t]
\centering
\includegraphics[width=6.7in]{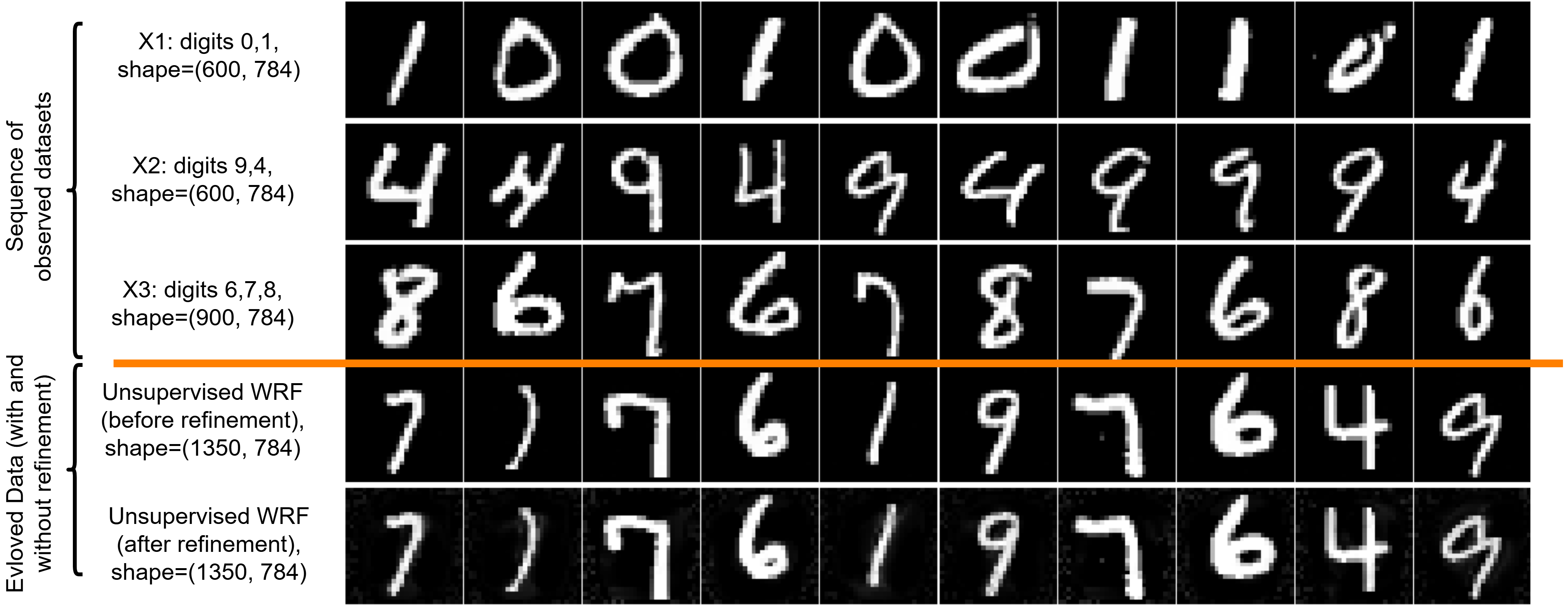}
\caption{Simulation of the WRF data evolution algorithm on a sequence of digit datasets, (1) a dataset of digits $\{0,1\}$, (2) a dataset of digits $\{4,9\}$, and (3) a dataset of digits $\{6,7,8\}$, in an \textbf{unsupervised} setting. As shown on the right-hand side, the evolved dataset $\b{X}_4$ contains the patterns from the datasets $\b{X}_1$, $\b{X}_2$, and $\b{X}_3$, i.e., it has the patterns of digits $\{1,4,6,7,9\}$.}
\label{figure_mnist_experiment_unsupervised}
\end{figure*}

\begin{figure*}[!t]
\centering
\includegraphics[width=6.7in]{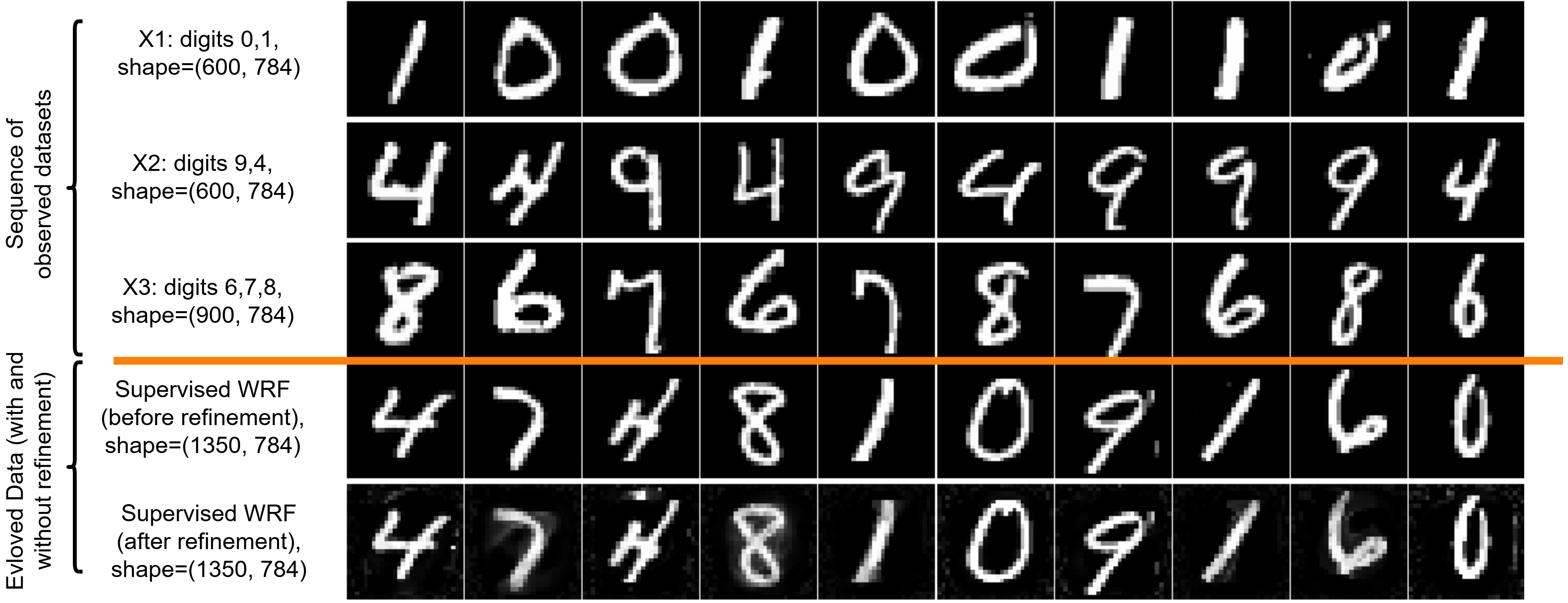}
\caption{Simulation of the WRF data evolution algorithm on a sequence of digit datasets, (1) a dataset of digits $\{0,1\}$, (2) a dataset of digits $\{4,9\}$, and (3) a dataset of digits $\{6,7,8\}$, in a \textbf{supervised} setting. As shown on the right-hand side, the evolved dataset $\b{X}_4$ contains the patterns from the datasets $\b{X}_1$, $\b{X}_2$, and $\b{X}_3$, i.e., it has the patterns of digits $\{0,1,4,6,7,8\}$.}
\label{figure_mnist_experiment_supervised}
\end{figure*}

\section{Conclusion and Future Work}\label{section_conclusion}

This paper introduced Wittgenstein's Rule Following (WRF) data evolution, a philomatically motivated framework for generating an evolved dataset from a sequence of previously observed datasets. Unlike ordinary data generation, WRF aims to continue the implicit rule expressed by a historical sequence while preserving resemblance to the family of previous datasets.

The proposed method is inspired by Wittgenstein's ideas of rule following and family resemblance. Each dataset is represented by structural descriptors rather than pointwise correspondences. These descriptors summarize geometric, distributional, clustering, and, in the supervised case, label-based properties of the data. WRF predicts rule-following and family-resemblance targets in descriptor space, generates candidate datasets from the observed history, scores them using several structural losses, and optionally refines the best candidate through differentiable optimization.

The simulations showed that WRF can generate meaningful continuations of evolving datasets in both unsupervised and supervised settings. In the synthetic experiments, the evolved datasets followed the structural trends of the historical data. In the MNIST experiment, the generated datasets preserved digit-like patterns, and the supervised version benefited from label information.

Future work can extend WRF in several directions. First, richer descriptors may be added, such as topological, graph-based, kernel-based, or neural descriptors. Second, more advanced rule-prediction models, such as Gaussian processes, recurrent neural networks, transformers, or state-space models, may replace the current second-order extrapolation. Third, candidate generation may be combined with modern generative models such as variational autoencoders, diffusion models, normalizing flows, or generative adversarial networks. Fourth, the supervised setting can be expanded to handle changing class structures, including new classes, disappearing classes, class splitting, and class merging.
Fifth, future work may introduce a discount factor for weighting historical datasets. The earlier datasets in the sequence, which are farther from the evolved dataset in the sequence, can have less contribution than the later datasets in the sequence. 

Finally, future work may study the theoretical properties of WRF, including stability, sensitivity to descriptor choices, and connections to concept drift, domain adaptation, continual learning, and synthetic data generation. Overall, WRF provides an initial framework for rule-following data evolution and shows how Wittgenstein's philosophical ideas can inspire a concrete machine learning algorithm.


\bibliography{References}
\bibliographystyle{icml2016}

\onecolumn
{\small
\tableofcontents
}
\twocolumn

\end{document}